\documentclass[hidelinks,onefignum,onetabnum]{siamart251216}
\usepackage{framed,multirow}
\usepackage{lipsum}
\usepackage{latexsym}
\usepackage{amssymb,amsmath,mathtools,amsfonts}
\usepackage{graphicx,subfig, version, color}
\usepackage{xcolor}
\usepackage{subcaption}
\usepackage{caption}
\usepackage{cases}
\usepackage{exscale}
\usepackage{hyperref}
\numberwithin{table}{section}
\numberwithin{figure}{section}
\numberwithin{equation}{section}
\usepackage{graphicx}
\usepackage{epstopdf}
\usepackage{bm}
\usepackage{booktabs}
\usepackage{longtable}
\usepackage{enumerate}
\DeclareMathOperator{\diag}{diag}
\usepackage[ruled,norelsize,vlined,linesnumbered]{algorithm2e}

\ifpdf
\DeclareGraphicsExtensions{.eps,.pdf,.png,.jpg}
\else
\DeclareGraphicsExtensions{.eps}
\fi

\newcommand{\Bone}{{\bm 1}}
\newcommand{\hW}{\hat{W}}
\newcommand{\hb}{{\hat{b}}}
\newcommand{\hc}{{\hat{c}}}

\newcommand{\bx}{\boldsymbol{x}}
\newcommand{\bz}{\boldsymbol{z}}

\newcommand{\bal}{\boldsymbol{\alpha}}
\newcommand{\tF}{{\rm F}}
\newcommand{\dt}{{\rm d}t}

\newcommand{\mL}{\mathcal{L}}
\newcommand{\mLs}{\mathcal{L}_{\rm{S}}}

\newcommand{\Bt}{{\bm 1}^\top}

\newsiamremark{remark}{Remark}
\newsiamremark{hypothesis}{Hypothesis}
\crefname{hypothesis}{Hypothesis}{Hypotheses}
\newsiamthm{claim}{Claim}
\newsiamremark{fact}{Fact}
\crefname{fact}{Fact}{Facts}
\DeclareMathOperator{\softmax}{softmax}

\headers{Layer Separation Optimization for Cross-Entropy Training}{Y. Liu, M. K. Ng and Y. Gu}

\title{A Layer Separation Optimization Framework for Cross-Entropy Training in Deep Learning\thanks{Submitted to the editors DATE.
\funding{This work was funded by NSFC Grant 92370101.}}}

\author{Yaru Liu\thanks{School of Mathematical Sciences, University of Electronic Science and Technology of China, Sichuan 611731, China ({\tt yaruliu@std.uestc.edu.cn, yiqigu@uestc.edu.cn}).}
\and
Michael K. Ng\thanks{Department of Mathematics, Hong Kong Baptist University, Kowloon Tong, Hong Kong, China} ({\tt michael-ng@hkbu.edu.hk})
\and
Yiqi Gu\footnotemark[2]}

\ifpdf
\hypersetup{
pdftitle={A Layer Separation Optimization Framework for Cross-Entropy Training in Deep Learning},
pdfauthor={Y. Liu, Y. Gu and M. K. Ng}
}
\fi

\begin{document}
\sloppy
\maketitle

\begin{abstract}
This paper investigates the deep learning optimization problem with softmax cross-entropy loss. We propose a layer separation strategy to alleviate the strong nonconvexity encountered during training deep networks. For cross-entropy models with fully connected and convolutional neural networks, we introduce auxiliary variables associated with hidden layer outputs and construct corresponding layer separation models, which decompose the original deeply nested optimization problem into a sequence of more manageable subproblems. We also conduct theoretical analyses, proving that the new layer separation loss provides an upper bound for the original cross-entropy loss. Moreover, we design alternating minimization algorithms and prove that, under appropriate conditions, these algorithms exhibit decreasing properties of the loss function. Numerical experiments validate the effectiveness of the proposed methods and indicate improved optimization behavior, especially for fully connected and convolutional neural networks.
\end{abstract}

\begin{keywords}
Deep neural networks, softmax cross-entropy loss, layer separation, auxiliary variables, optimization.
\end{keywords}

% REQUIRED
\begin{MSCcodes}
65K10, 68T07, 90C30
\end{MSCcodes}

\section{Introduction}
Training deep neural networks (DNNs) with the softmax cross-entropy loss is a common task in various classification problems, such as image processing and pattern recognition. Specifically, it corresponds to a deeply coupled and highly nonconvex optimization of the form
\begin{equation}\label{eq:intro_ce}
\min_{\theta}\ -\frac{1}{N}\sum_{n=1}^{N} \bal_n^\top \ln\big(\softmax(\phi(\bx_n;\theta))\big),
\end{equation}
where $\{(\bx_n,\bal_n)\}_{n=1}^N$ is the training dataset, $\phi(\cdot;\theta)$ denotes a DNN $\phi$ parameterized by $\theta$ in 
%, such as 
a fully connected neural network (FNN) or a convolutional neural network (CNN), and $\bal_n$ is the one-hot label vector associated with $\bx_n$. 

Due to the nonlinear dependence of the network outputs on the parameters, problem~\eqref{eq:intro_ce} becomes increasingly difficult to solve when the network depth grows. In practice, gradient-based methods such as gradient descent, stochastic gradient descent, and their variants are the standard tools for solving this problem \cite{lecun1998,bottou2010,Duchi2011,Kingma2015}. These methods may suffer from slow convergence in deep neural network architectures, sensitivity to initialization and hyperparameter selection, and unstable training dynamics, see for instance \cite{Glorot2010,Sutskever2013,ioffe2015,dauphin2014}. 

On the theoretical side, several recent works have studied the optimization and convergence properties of gradient-based algorithms for training DNNs. In particular, under suitable random initialization and over-parameterization conditions, gradient descent and stochastic gradient descent have been shown to achieve global convergence or zero training error \cite{arora2019,Du2019,Allen-Zhu2019, Zou2019,Du2019_2,Allen-Zhu2019-2,E2020,Zhou2021,Oymak2020,Cao2019}. However, such results typically rely on over-parameterized networks whose widths or depths are sufficiently large, which is impractical in computational settings. This limitation motivates the development of optimization frameworks that can better deal with \eqref{eq:intro_ce}, which contains a strong inter-layer coupling in DNNs.

A natural approach to weaken such strong inter-layer coupling is to introduce auxiliary variables associated with the hidden layer outputs and incorporate these variables and their consistency constraints into the loss function via weighted quadratic penalty terms. This reformulation decomposes the original deep neural network architecture into a sequence of shallow substructured problems. Therefore, the original highly nonconvex optimization problem is transformed into several more tractable subproblems with weak inter-layer coupling. The resulting optimization problem can be solved using alternating minimization algorithms with closed-form updates, which avoids repeated backpropagation through the full deep composition during optimization. In the least-squares framework, the above-mentioned layer weakening ideas have been shown to be effective for FNNs and physics-informed neural networks; see \cite{Liu2025_2, Liu2025}. We remark that the quadratic structure of the least-squares loss function plays a crucial role in the construction, so that the auxiliary variable reformulation can be established and handled properly. However, the softmax cross-entropy loss is nonlinear and nonquadratic with respect to the pre-softmax outputs, so the use of the above auxiliary variable reformulation no longer admits the same direct structure as in the least-squares regression problem. In particular, a direct auxiliary variable reformulation does not automatically yield a surrogate loss that remains quantitatively connected to the original cross-entropy loss. 

Our analysis is based on estimating a quantitative relation that bounds the original cross-entropy loss by the layer separation surrogate loss, along with a consistency error that measures the discrepancy between the original and surrogate pre-softmax outputs. Specifically, let $\Phi  = [ \phi(\bx_1;\theta) \ldots \phi(\bx_N;\theta) ]$ be the original pre-softmax output given by the learner network, and let $\widetilde{\Phi}$ be the surrogate pre-softmax output induced by the auxiliary variables in our proposal. In this paper, we establish an estimate of the following form
\begin{equation*}
\left(\mathcal{L}(\Phi)\right)^2\leq C_1\left(\mathcal{L}(\widetilde{\Phi})\right)^2
+C_2\|\Phi-\widetilde{\Phi}\|_\tF^2,
\end{equation*}
where $\mathcal{L}(\cdot)$ denotes the cross-entropy loss given in \eqref{eq:intro_ce}, and $C_1,C_2$ are positive constants. Note that $\|\Phi-\widetilde{\Phi}\|_\tF$ measures the deviation between these two output representations. Such an estimate provides the crucial theoretical basis for constructing the auxiliary variable reformulation of the cross-entropy model.

Motivated by this estimate, we construct adaptive weighted layer separation models for both FNNs and CNNs, and show that the resulting layer separation losses provide upper bounds for the original cross-entropy loss up to a constant. A crucial feature of our construction is that the penalty weights are chosen adaptively according to the network parameters. This is essential for relating the layer separation loss to the original cross-entropy loss; constant weights generally do not provide this consistency. Since the layer separation formulation retains only couplings between variables in neighboring layers, the original deeply coupled training problem is decomposed into a series of simpler subproblems. Based on the proposed structure, we further develop corresponding alternating minimization algorithms for the proposed models and establish a monotone loss decreasing property under suitable assumptions. Numerical results indicate that the proposed models are competitive in shallow settings and become more advantageous as the network depth increases. In particular, for deep architectures, the models achieve lower training loss, reach high training accuracy earlier, and exhibit smaller variability with respect to random initialization than the standard cross-entropy models.

The remainder of this paper is organized as follows. In Section~\ref{sec:1}, we introduce preliminary results on the cross-entropy loss and establish several basic estimates. In Section~\ref{sec:2}, we develop the fully connected layer separation model together with its corresponding alternating minimization algorithm and decreasing property of the loss function. In Section~\ref{sec:3}, we extend the proposed framework to CNNs. In Section~\ref{sec:4}, we present numerical experiments to demonstrate the effectiveness of the proposed methods. Finally, in Section~\ref{sec:5}, we conclude the paper and discuss several directions for future research.

\section{Preliminaries}\label{sec:1}

Consider a multiclass classification problem with $J\geq2$ classes. For any vector $\bz=[z_1~\ldots~z_J]^\top\in\mathbb{R}^J$, define the softmax probability vector by
\begin{equation*}
p(\bz)=[p_1(\bz)~\ldots~p_J(\bz)]^\top\in\mathbb{R}^J,
\end{equation*}
with $\sum_{j=1}^J p_j(\bz)=1$ and $p_j(\bz) = \frac{e^{z_j}}{\sum_{k=1}^J e^{z_k}}\geq 0$ for $j=1,\dots,J$. Let $\bal=[\alpha_1~\ldots~\alpha_J]^\top\in\mathbb{R}^J$ be a one-hot label vector, i.e., $\alpha_{j}\in\{0,1\}$ for $j=1,\ldots,J$ and $\sum_{j=1}^J\alpha_j=1$. The corresponding cross-entropy loss is defined by
\begin{equation}\label{eq:sec2-01}
\ell(\bz,\bal) = -\sum_{j=1}^J \alpha_j \ln p_j(\bz).
\end{equation}
Let $\{(\bz_n,\bal_n)\}_{n=1}^N$ be a dataset, where $\bal_n\in\mathbb{R}^J$ is the one-hot label associated with the sample $\bz_n$. Let $Z=[\bz_1~\ldots~\bz_N]\in\mathbb{R}^{J\times N}$ and $A=[\bal_1~\ldots~\bal_N]\in\mathbb{R}^{J\times N}$. Define the sample-wise loss vector
\begin{equation}\label{eq:sec2-02}
\mL_{\mathrm{vec}}(Z,A)=[\ell(\bz_1,\bal_1)~\ldots~\ell(\bz_N,\bal_N)]^\top\in\mathbb{R}^N,
\end{equation}
and the averaged cross-entropy loss
\begin{equation}\label{eq:sec2-03}
\mL(Z,A) = \frac{1}{N}\sum_{n=1}^N\ell(\bz_n,\bal_n).
\end{equation}
By the Cauchy-Schwarz inequality,
\begin{equation}\label{eq:sec2-04}
\mL(Z,A)=\frac{1}{N}\Bt \mL_{\mathrm{vec}}(Z,A)\leq \frac{1}{N}\|\Bt\|_2\|\mL_{\mathrm{vec}}(Z,A)\|_2=\frac{1}{\sqrt{N}}\|\mL_{\mathrm{vec}}(Z,A)\|_2,
\end{equation}
where $\Bone$ denotes the all-one column vector of size $N$ and $\|\cdot\|_2$ denotes the Euclidean norm.

The following lemma establishes a Lipschitz continuity estimate for the cross-entropy loss function with respect to its first variable.

\begin{lemma}\label{lem:2.1}
Let $\bal\in\mathbb{R}^J$ be a one-hot label vector. Then the cross-entropy loss $\ell(\cdot,\bal)$ defined by \eqref{eq:sec2-01} satisfies
\begin{equation}\label{eq:lem1-0}
|\ell(\bz_1,\bal)-\ell(\bz_2,\bal)|\leq \sqrt{2}\|\bz_1-\bz_2\|_{2},
\quad \forall~\bz_1,\bz_2\in\mathbb{R}^J.
\end{equation}
Moreover, for any one-hot label matrix $A\in\mathbb{R}^{J\times N}$, the sample-wise loss vector $\mL_{\mathrm{vec}}(\cdot,A)$ defined by \eqref{eq:sec2-02} satisfies
\begin{equation}\label{eq:lem1-1}
\|\mL_{\mathrm{vec}}(Z_1,A)-\mL_{\mathrm{vec}}(Z_2,A)\|_{2}\leq \sqrt{2}\|Z_1-Z_2\|_{\tF}, \quad \forall~Z_1,Z_2\in\mathbb{R}^{J\times N}.
\end{equation}
\end{lemma}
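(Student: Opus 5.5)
The plan is to apply the mean value theorem to the smooth function $\ell(\cdot,\bal)$ and to bound its gradient uniformly in the Euclidean norm.

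Since $\bal$ is one-hot, there is a unique index $k$ with $\alpha_k=1$. Then
\[
\ell(\bz,\bal)=-\ln p_k(\bz)=\ln\Big(\sum_{j=1}^J e^{z_j}\Big)-z_k .
\]
This is a $C^\infty$ function of $\bz$, and its gradient is $\nabla_{\bz}\ell(\bz,\bal)=p(\bz)-\bal$.

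The key step is the uniform bound $\|p(\bz)-\bal\|_2\le\sqrt{2}$ for every $\bz$. Writing out the components,
\[
\|p(\bz)-\bal\|_2^2=(1-p_k)^2+\sum_{j\neq k}p_j^2 .
\]
Because $p_j\in[0,1]$ and $\sum_{j\ne k}p_j=1-p_k$, we have
\[
\sum_{j\ne k}p_j^2\le\Big(\sum_{j\ne k}p_j\Big)^2=(1-p_k)^2 .
\]
Hence $\|p-\bal\|_2^2\le 2(1-p_k)^2\le 2$. This computation is the only real content of the lemma.

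For \eqref{eq:lem1-0}, take $\bz_1,\bz_2\in\mathbb{R}^J$ and set $g(t)=\ell(\bz_2+t(\bz_1-\bz_2),\bal)$ for $t\in[0,1]$. By the fundamental theorem of calculus (or the mean value theorem) and Cauchy--Schwarz,
\[
|\ell(\bz_1,\bal)-\ell(\bz_2,\bal)|=\Big|\int_0^1\nabla\ell(\cdot)^\top(\bz_1-\bz_2)\,\dt\Big|\le\sqrt{2}\,\|\bz_1-\bz_2\|_2 .
\]

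For \eqref{eq:lem1-1}, let $Z_1,Z_2$ have columns $\bz^{(1)}_n,\bz^{(2)}_n$. Applying \eqref{eq:lem1-0} to each column, squaring, and summing over $n$ gives
\[
\|\mL_{\mathrm{vec}}(Z_1,A)-\mL_{\mathrm{vec}}(Z_2,A)\|_2^2\le 2\sum_{n=1}^N\|\bz^{(1)}_n-\bz^{(2)}_n\|_2^2=2\|Z_1-Z_2\|_\tF^2 .
\]
Taking square roots finishes the proof.

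The only step needing care is the gradient bound; everything after it is routine.
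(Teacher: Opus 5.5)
Your proposal is correct and follows essentially the same route as the paper: compute $\nabla_{\bz}\ell(\bz,\bal)=p(\bz)-\bal$, bound its Euclidean norm uniformly by $\sqrt{2}$, apply the mean value theorem, and then sum the squared column-wise bounds. The only difference is cosmetic. You bound $\|p(\bz)-\bal\|_2^2$ coordinatewise and obtain the slightly sharper $2(1-p_k)^2$, whereas the paper expands it as $\|p(\bz)\|_2^2+\|\bal\|_2^2-2p(\bz)^\top\bal\le 2$.
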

\begin{proof}
For any $\bz\in\mathbb{R}^J$, the definition of $p(\bz)$ yields
\begin{equation}\label{eq:lem1-3}
\frac{\partial p_j}{\partial z_i}=p_j(\delta_{ij} - p_i),
\end{equation}
where $\delta_{ij}$ denotes the Kronecker delta. Then, using the chain rule and \eqref{eq:lem1-3}, we obtain
\begin{equation*}
\frac{\partial\ell}{\partial z_i}=-\sum_{j=1}^J \alpha_j \frac{1}{p_j}\frac{\partial p_j}{\partial z_i}
=-\sum_{j=1}^J \alpha_j(\delta_{ij}-p_i)=-\alpha_i+p_i,
\end{equation*}
for $i=1,\ldots,J$, which gives 
\begin{equation*}
\nabla_{\bz} \ell(\bz,\bal) = p(\bz)-\bal.
\end{equation*}
Since $\bal$ is one-hot label vector and $p(\bz)$ is a probability vector, we obtain
\begin{equation}\label{eq:lem1-4}
\|\nabla_{\bz} \ell(\bz,\bal)\|_2^2=\|p(\bz)-\bal\|_2^2=\|p(\bz)\|_2^2 + \|\bal\|_2^2 - 2p(\bz)^\top\bal\leq 2.
\end{equation}
Applying the mean value theorem to $\ell(\cdot,\bal)$ together with the uniform bound \eqref{eq:lem1-4}, we obtain \eqref{eq:lem1-0}.

For any $Z_1=[\bz_1^{(1)}~\ldots~\bz_N^{(1)}]$ and $Z_2=[\bz_1^{(2)}~\ldots~\bz_N^{(2)}]\in\mathbb{R}^{J\times N}$, applying \eqref{eq:lem1-0} yields
\begin{equation*}
\|\mL_{\mathrm{vec}}(Z_1,A)-\mL_{\mathrm{vec}}(Z_2,A)\|_2^2 \leq 2 \sum_{n=1}^N \|\bz^{(1)}_n-\bz^{(2)}_n\|_2^2=2\|Z_1-Z_2\|_{\tF}^2,
\end{equation*}
which gives \eqref{eq:lem1-1}.
\end{proof}

\begin{remark}\label{rem1}
For any $Z_1,Z_2\in\mathbb{R}^{J\times N}$ and one-hot label matrix $A$, using \eqref{eq:sec2-04} and Lemma \ref{lem:2.1}, we obtain
\begin{equation}\label{eq:sec2-07}
\mL^2(Z_1,A)\leq\frac{1}{N}\|\mL_{\mathrm{vec}}(Z_1,A)\|_2^2\leq\frac{1}{N}\Big(2\|\mL_{\mathrm{vec}}(Z_2,A)\|_2^2+4\|Z_1-Z_2\|_\tF^2\Big).
\end{equation}
This estimate will serve as the basic ingredient in the layer separation reformulation developed in the next section.
\end{remark}

\section{Fully connected neural network}\label{sec:2}

In this section, we focus on the FNN case and present the corresponding layer separation model, algorithm and analysis.

Suppose $d\geq 1 $ is the dimension and $\bx\in\mathbb{R}^d$ is the input of FNN. Recall that an FNN of depth $L$ and width $M$ is defined as follows
\begin{equation}\label{eq:phi-FNN}
\phi_{\mathrm{FNN}}(\bx;\theta)= W_L\sigma(W_{L-1}\sigma(\ldots\sigma(W_1\bx+b_1)\ldots)+b_{L-1}),
\end{equation}
where $W_1\in \mathbb{R}^{M\times d}$, $W_2,\dots, W_{L-1}\in \mathbb{R}^{M\times M}$, $W_L\in\mathbb{R}^{J\times M}$ are weights; $b_1,\dots,b_{L-1}\in \mathbb{R}^{M\times 1}$ are biases; $\sigma$ is some activation function which is applied entry-wise to a vector to obtain another vector of the same size; $\theta=\{W_L, W_l,b_l\}_{l=1}^{L-1}$ is the set of all free parameters.

Given a training dataset $\{(\bx_n,\bal_n)\}_{n=1}^N$, where $\bx_n\in\mathbb{R}^d$ is a (column) feature vector and $\bal_n\in\mathbb{R}^J$ is the (column) one-hot label vector of $\bx_n$. For simplicity, denote 
$$X:=[\bx_1~\ldots~\bx_N]\in\mathbb{R}^{d\times N},\quad A:=[\bal_1~\ldots~\bal_N]\in\mathbb{R}^{J\times N},$$
$$\phi_{\mathrm{FNN}}(X;\theta):=[\phi_{\mathrm{FNN}}(\bx_1;\theta)~\ldots~\phi_{\mathrm{FNN}}(\bx_N;\theta)].$$ 
Consider the cross-entropy loss defined by \eqref{eq:sec2-03} and let $Z=\phi_{\mathrm{FNN}}(X;\theta)$. The cross-entropy model for the FNNs can be rewritten as
\begin{multline}\label{loss-FNN}
\min_{\theta}\mL(\phi_{\mathrm{FNN}}(X;\theta),A)=\frac{1}{N}\Big\|-\diag_v(A^\top(W_L\sigma(W_{L-1}\sigma(\cdots\sigma(W_1X+b_1\Bt)\\
\cdots)+b_{L-1}\Bt)))+\ln\sum_{k=1}^Je^{W_L(k,:)\sigma(W_{L- 1}\sigma(\cdots\sigma(W_1X+b_1\Bt)\cdots)+b_{L-1}\Bt)}\Big\|_1,
\end{multline}
where $\|\cdot\|_1$ is the $\ell^1$-norm of a vector. Here, $\diag_v(K)$ denotes the row vector formed by the diagonal entries of a square matrix $K\in\mathbb{R}^{N\times N}$.

For the cross-entropy model, training the deep network $\phi_{\mathrm{FNN}}$ is challenging due to the highly nonconvex and nested structure of the loss, which may lead to slow convergence, poor local minima, and vanishing gradients. To address this difficulty, we propose a layer separation formulation for the cross-entropy loss by introducing auxiliary variables that decouple the deep network into a sequence of shallow networks. This reformulation provides a new optimization framework that facilitates theoretical analysis and algorithmic design. Moreover, according to Remark \ref{rem1}, the cross-entropy loss can be controlled by the discrepancy between two feature representations, which gives the key theoretical justification for the proposed layer separation model.

\subsection{Layer separation model with auxiliary variables}

We consider the FNN defined by \eqref{eq:phi-FNN}. Our goal is to construct a loss with layer-separated neural networks that bounds the original cross-entropy loss. To separate the layers, we introduce the auxiliary variables
\begin{equation}\label{eq:variables_c}
c_l=
\begin{cases}
W_1X+b_1\Bt,\quad &l=1,\\
W_l\sigma(c_{l-1})+b_l\Bt,\quad &l=2,\ldots,L-1.
\end{cases}
\end{equation}
By these variables, we can decouple the deep network into a sequence of shallow networks. Specifically, we introduce $\{c_l\}$ in the optimization \eqref{loss-FNN} by adding $\ell^2$ penalty terms of the constraints \eqref{eq:variables_c}, each of which is for one layer of the neural network. This leads to the following layer separation model:
\begin{equation}\label{LySep_FNN}
\min_{W_l,b_l,c_l}\mLs^{\mathrm{FNN}}:=\frac{1}{\sqrt{N}}S(\Theta)^{\frac 12},
\end{equation}
where $\Theta=\{W_l,b_l,c_l\}$ collects all parameters,
\begin{multline*}
S(\Theta)=\Big\|-\diag_v(A^\top(W_L\sigma(c_{L-1})))+\ln\sum_{k=1}^Je^{W_L(k,:)\sigma(c_{L-1})}\Big\|_2^2\\
+\sum_{l=2}^{L-1}\omega_l\|W_l\sigma(c_{l-1})+b_l\Bt-c_l\|_\tF^2+\omega_1\|W_1X+b_1\Bt-c_1\|_\tF^2,
\end{multline*}
and $\{\omega_l\}_{l=1}^{L-1}$ are the weights of these constraints. If the weights are set to constants, the layer separation model will be inconsistent with the original cross-entropy model; namely, a set of $\{W_l,b_l,c_l\}$ may lead to small $\mLs^{\mathrm{FNN}}$ but significantly large $\mL$. Thus, we choose adaptive weights $\omega_l$ that depend explicitly on $\{W_l\}$. Here, we set
\begin{equation}\label{eq:weights}
\omega_l=\prod_{j=l+1}^{L}\|W_j\|_\tF^2,\quad \text{for}~l=1,\ldots,L-1.
\end{equation}
This particular choice of weights is motivated by the estimate established in Theorem \ref{thm3.1}.

The following result shows that the layer separation loss $\mLs^{\mathrm{FNN}}$ with the adaptive weights \eqref{eq:weights} yields an upper bound on the original cross-entropy loss. This provides a theoretical justification for using $\mLs^{\mathrm{FNN}}$ as a surrogate loss.

\begin{theorem}\label{thm3.1}
Suppose $\sigma$ is Lipschitz continuous, i.e., $|\sigma(z_1)-\sigma(z_2)|\leq B|z_1-z_2|$ for some $B>0$ and any $z_1,z_2\in\mathbb{R}$. Then for any $\{W_l\}_{l=1}^{L}$ and $\{b_l,c_l\}_{l=1}^{L-1}$, there holds
\begin{equation}\label{eq:thm3.1-00}
\mL(\phi_{\mathrm{FNN}},A)\leq C_B \cdot \sqrt{L-1} \cdot \mLs^{\mathrm{FNN}},
\end{equation}
where $C_{B}=\max\{\sqrt{2},2B,2B^{L-1}\}$.
\end{theorem}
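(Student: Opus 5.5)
The plan is to apply Remark~\ref{rem1} with $Z_1=\phi_{\mathrm{FNN}}(X;\theta)$, the true pre-softmax output, and $Z_2=W_L\sigma(c_{L-1})$, the surrogate output built from the auxiliary variables. By definition $\|\mL_{\mathrm{vec}}(Z_2,A)\|_2^2$ is exactly the first term of $S(\Theta)$; call it $T$. Then \eqref{eq:sec2-07} gives
\begin{equation*}
\mL^2(\phi_{\mathrm{FNN}},A)\leq \frac1N\Big(2T+4\|Z_1-Z_2\|_\tF^2\Big).
\end{equation*}
This reduces the theorem to bounding $\|Z_1-Z_2\|_\tF^2$ by a constant times $(L-1)$ times the weighted penalty part of $S(\Theta)$.

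For that bound, introduce the true hidden pre-activations $h_1=W_1X+b_1\Bt$ and $h_l=W_l\sigma(h_{l-1})+b_l\Bt$ for $l=2,\ldots,L-1$. Define the errors $e_l=\|h_l-c_l\|_\tF$ and the residuals
\begin{equation*}
r_1=\|W_1X+b_1\Bt-c_1\|_\tF,\qquad r_l=\|W_l\sigma(c_{l-1})+b_l\Bt-c_l\|_\tF \quad (l\ge 2).
\end{equation*}
Then $e_1=r_1$. For $l\ge 2$, split
\begin{equation*}
h_l-c_l=W_l\big(\sigma(h_{l-1})-\sigma(c_{l-1})\big)+\big(W_l\sigma(c_{l-1})+b_l\Bt-c_l\big).
\end{equation*}
Submultiplicativity $\|WY\|_\tF\le\|W\|_\tF\|Y\|_\tF$ and the entrywise Lipschitz bound $\|\sigma(Y_1)-\sigma(Y_2)\|_\tF\le B\|Y_1-Y_2\|_\tF$ give the recursion $e_l\le B\|W_l\|_\tF e_{l-1}+r_l$. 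Likewise $\|Z_1-Z_2\|_\tF\le B\|W_L\|_\tF e_{L-1}$.

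Unrolling the recursion yields
\begin{equation*}
\|Z_1-Z_2\|_\tF\le \sum_{l=1}^{L-1} B^{L-l}\Big(\prod_{j=l+1}^{L}\|W_j\|_\tF\Big) r_l=\sum_{l=1}^{L-1}B^{L-l}\sqrt{\omega_l}\,r_l,
\end{equation*}
which is precisely where the adaptive weights \eqref{eq:weights} appear. Cauchy--Schwarz over the $L-1$ terms then gives
\begin{equation*}
\|Z_1-Z_2\|_\tF^2\le (L-1)\max_{1\le l\le L-1}B^{2(L-l)}\sum_{l=1}^{L-1}\omega_l r_l^2 .
\end{equation*}
Since the exponent $L-l$ ranges over $1,\ldots,L-1$, the maximum is at most $\max\{B^2,B^{2(L-1)}\}$, whether $B\ge1$ or $B<1$.

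Finally, combine the pieces. Using $2T\le 2(L-1)T$ (because $L\ge2$), we get
\begin{equation*}
\mL^2\le \frac{L-1}{N}\max\{2,4B^2,4B^{2(L-1)}\}\,S(\Theta)=C_B^2(L-1)\big(\mLs^{\mathrm{FNN}}\big)^2 ,
\end{equation*}
and taking square roots gives \eqref{eq:thm3.1-00}. The main obstacle is the error-propagation step: the recursion has to be unrolled so that the product of weight norms matches $\sqrt{\omega_l}$ exactly, and the powers of $B$ have to be handled uniformly in both regimes $B\ge1$ and $B<1$. The rest is bookkeeping of constants.
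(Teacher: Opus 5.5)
Your proposal is correct and follows the paper's proof. You apply \eqref{eq:sec2-07} with $Z_2=W_L\sigma(c_{L-1})$, unroll the layerwise error recursion to obtain exactly the bound \eqref{eq:thm3.1-02} (where $\sqrt{\omega_l}$ appears), and combine. You also spell out three steps that the paper compresses into ``combining yields'': the Cauchy--Schwarz step over the $L-1$ terms, the uniform handling of the powers of $B$ for both $B\ge 1$ and $B<1$, and the use of $L\ge 2$ to absorb the factor $2$ on the first term into $C_B^2(L-1)$.
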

\begin{proof}
According to \eqref{eq:sec2-07}, we obtain
\begin{multline}\label{eq:thm3.1-01}
\mL^2(\phi_{\mathrm{FNN}},A)\leq\frac{1}{N}\Big(2\Big\|-\diag_v(A^\top(W_L\sigma(c_{L-1})))+\ln\sum_{k=1}^Je^{W_L(k,:)\sigma(c_{L-1})}\Big\|_2^2\\
+4\|\phi_{\mathrm{FNN}}-W_L\sigma(c_{L-1})\|_\tF^2\Big).
\end{multline}
By the Lipschitz continuity of $\sigma$, it holds that
\begin{equation*}
\|\phi_{\mathrm{FNN}}-W_L\sigma(c_{L-1})\|_\tF\leq B\|W_L\|_\tF\|\mathcal{T}_{L-1}\|_\tF,
\end{equation*}
where $\mathcal{T}_{L-1}=W_{L-1}\sigma(\ldots\sigma(W_1X+b_1\Bt)\ldots)+b_{L-1}\Bt-c_{L-1}$. Then, using the triangle inequality and Lipschitz continuity again, we have
\begin{equation*}
\|\mathcal{T}_{L-1}\|_\tF
\leq B\|W_{L-1}\|_\tF\|\mathcal{T}_{L-2}\|_\tF+\|W_{L-1}\sigma(c_{L-2})+b_{L-1}\Bt-c_{L-1}\|_\tF,
\end{equation*}
where $\mathcal{T}_{L-2}=W_{L-2}\sigma(\ldots\sigma(W_1X+b_1\Bt)\ldots)+b_{L-2}\Bt-c_{L-2}$. 
Doing the preceding steps recursively, we finally have
\begin{multline}\label{eq:thm3.1-02}
\|\phi_{\mathrm{FNN}}-W_L\sigma(c_{L-1})\|_\tF\leq B^{L-1}\left(\prod_{k=2}^{L}\|W_k\|_\tF\right)\|W_1X+b_1\Bt-c_1\|_\tF\\
+\sum_{l=2}^{L-1}B^{L-l}\left(\prod_{k=l+1}^{L}\|W_k\|_\tF\right)\|W_l\sigma(c_{l-1})+b_l\Bt-c_l\|_\tF.
\end{multline}
Combining \eqref{eq:thm3.1-01} and \eqref{eq:thm3.1-02} yields \eqref{eq:thm3.1-00}.
\end{proof}

By Theorem \ref{thm3.1}, among all choices of network parameters and auxiliary variables, the layer separation loss $\mLs^{\mathrm{FNN}}$ is always an upper bound of the original cross-entropy loss $\mL(\phi_{\mathrm{FNN}},A)$ up to a constant. In particular, during the optimization process, if we make $\mLs^{\mathrm{FNN}}\to 0$, then $\mL(\phi_{\mathrm{FNN}},A)\to 0$.

\subsection{Algorithm for the FNN}\label{Algorithm:FNN}

In this subsection, we develop an alternating minimization algorithm to minimize $\mLs^{\mathrm{FNN}}$. At each step, only one parameter (matrix or vector) is updated, while the others are kept fixed. Since minimizing $\mLs^{\mathrm{FNN}}(\Theta)$ is equivalent to minimizing $S(\Theta)$, we work with $S(\Theta)$ in the sequel. Without ambiguity, for any $x\in\Theta$, we use $S(x)$ to denote the function $S$ with parameter $x$ varying and the others fixed in this section.  

We first consider updating $W_l$ with the other parameters fixed, for $l=1,\ldots, L-1$. In this case, minimizing $S(W_l)$ is equivalent to minimizing
\begin{equation}\label{eq:sec3.2-01}
\|W_lV_l-B_l\|_\tF^2+\lambda_l\|W_l\|_\tF^2,
\end{equation}
where
\begin{equation*}
V_l=\begin{cases}
\sigma(c_{l-1}),& 2\leq l\leq L-1,\\
X,& l=1,
\end{cases}\qquad
B_l=c_l-b_l\Bt, \quad 1\leq l\leq L-1,
\end{equation*}
and $\lambda_1=0$, $\lambda_2=\|W_{1}V_{1}-B_{1}\|_\tF^2$, $\lambda_l=\sum_{i=1}^{l-2}\left(\prod_{j=i+1}^{l-1}\|W_j\|_\tF^2\right)\|W_iV_i-B_i\|_\tF^2+\|W_{l-1}V_{l-1}-B_{l-1}\|_\tF^2$ ($3\leq l\leq L-1$)
are all fixed. The minimizer of \eqref{eq:sec3.2-01} is precisely the least-squares solution of the linear system
\begin{equation}\label{eq:wl}
W_l\begin{bmatrix}V_l & \sqrt{\lambda_l}I\end{bmatrix}=\begin{bmatrix}B_l & O\end{bmatrix},
\end{equation}
where $I$ is the identity matrix and $O$ is the zero matrix.

Similarly, $b_l$ is updated as the least-squares solution of the linear system 
\begin{equation}\label{eq:bl}
b_l\Bt=c_l-W_lV_l,\quad 1\leq l\leq L-1.
\end{equation}
Note that the least-squares solution is given by the mean of the column vectors on the right-hand side.

Furthermore, due to the nonlinearity of the activation function $\sigma$, the minimization sub-problems for $\{c_l\}$ cannot be solved in closed form. For $1\leq l\leq L-2$, each $c_l$ is involved only in two terms of $S(c_l)$, and reducing $S(c_l)$ is equivalent to reducing
\begin{equation*}
\omega_{l+1}\|W_{l+1}\sigma(c_l)-B_{l+1}\|_\tF^2+\omega_{l}\|W_lV_l+b_l\Bt-c_l\|_\tF^2.
\end{equation*}
The gradient of $S(c_l)$ is
\begin{equation}\label{eq:sec3.2-02}
\nabla_{c_l} S(c_l)=2\Big[\omega_{l+1}\left(W_{l+1}^\top(W_{l+1}\sigma(c_l)-B_{l+1})\right)\odot\sigma'(c_l)+\omega_{l}(c_l-W_lV_l-b_l\Bt)\Big].
\end{equation}
Here $\odot$ means entry-wise multiplication of two matrices of the same size. We update $\{c_l\}$ by gradient descent with a monotone backtracking line search: starting from $\tau>0$, we shrink $\tau\leftarrow \eta\tau$ ($0<\eta<1$) until 
\begin{equation*}
S\big(c_l-\tau \nabla_{c_l}S(c_l)\big)\leq S(c_l)
\end{equation*}
holds, and then set $c_l\leftarrow c_l-\tau \nabla_{c_l}S(c_l)$. 

Next, we consider the update of $c_{L-1}$ and $W_L$. Define the softmax probability matrix
\begin{equation*}
\mathcal{P}(W_L,c_{L-1})=
\begin{bmatrix}
p_1(W_L,c_{L-1})\\
\vdots\\
p_J(W_L,c_{L-1})
\end{bmatrix}
\quad \text{with} \quad
p_j(W_L,c_{L-1})=\frac{e^{W_L(j,:)\sigma(c_{L-1})}}{\sum_{k=1}^Je^{W_L(k,:)\sigma(c_{L-1})}},
\end{equation*}
where the division is entry-wise. Also, we define
\begin{equation*}
\mathcal{R}(W_L,c_{L-1})=-\diag_v(A^\top(W_L\sigma(c_{L-1})))+\ln\sum_{k=1}^Je^{W_L(k,:)\sigma(c_{L-1})}.
\end{equation*}
To simplify the notations, we write
\begin{equation}\label{eq:sec3.2-03}
\begin{aligned}
\mathcal{R}(c_{L-1}) := \mathcal{R}(W_L,c_{L-1}),\quad 
\mathcal{P}(c_{L-1}):=\mathcal{P}(W_L,c_{L-1}) \quad \text{when $W_L$ is fixed},\\
\mathcal{R}(W_L) := \mathcal{R}(W_L,c_{L-1}),\quad
\mathcal{P}(W_L):=\mathcal{P}(W_L,c_{L-1}) \quad \text{when $c_{L-1}$ is fixed}.
\end{aligned}
\end{equation}
Then, for the variable $c_{L-1}$, we reduce $S(c_{L-1})$ by reducing
\begin{equation*}
\|\mathcal{R}(c_{L-1})\|_2^2+\|W_L\|_\tF^2\|W_{L-1}V_{L-1}+b_{L-1}\Bt-c_{L-1}\|_\tF^2.
\end{equation*}
The gradient of $S(c_{L-1})$ is
\begin{multline}\label{eq:sec3.2-4}
\nabla_{c_{L-1}}S(c_{L-1})=2(W_L^\top(\mathcal{P}(c_{L-1})-A)\diag(\mathcal{R}(c_{L-1})))\odot\sigma'(c_{L-1})\\
+2\|W_L\|_\tF^2(c_{L-1}-W_{L-1}V_{L-1}-b_{L-1}\Bt).
\end{multline}
Here $\diag(v)\in\mathbb{R}^{N\times N}$ denotes the diagonal matrix generated by the vector $v\in\mathbb{R}^{N}$. And for the variable $W_L$, reducing $S(W_L)$ is equivalent to reducing
\begin{equation*}
\lambda\|W_L\|_\tF^2+\|\mathcal{R}(W_L)\|_2^2,
\end{equation*}
where $\lambda=\|W_{L-1}V_{L-1}-B_{L-1}\|_\tF^2+\sum_{l=1}^{L-2}\Big(\prod_{j=l+1}^{L-1}\|W_j\|_\tF^2\Big)\|W_lV_{l}-B_l\|_\tF^2$. The gradient of $S(W_L)$ is
\begin{equation}\label{eq:sec3.2-5}
\nabla_{W_L}S(W_L)=2\lambda W_L+2(\mathcal{P}(W_L)-A)\diag(\mathcal{R}(W_L))\sigma(c_{L-1})^{\top}.
\end{equation}

In summary, we propose the algorithm that updates $\{W_l, b_l, c_l\}$ for every $l$ from large to small (see Algorithm \ref{alg01}). Here, $\tau_l^k>0$ is the learning rate in the gradient descent update, selected by backtracking line search.

\begin{algorithm}
\DontPrintSemicolon
\KwIn{input data $X$; label matrix $A$; maximum number of iterations $N_k$}
\KwOut{A feasible solution $\{W_l,b_l,c_l\}$.}
\Begin{Initialize $W_L$ and $\{W_l,b_l,c_l\}_{l=1}^{L-1}$\;
\For{$k = 0,\cdots,N_k-1$}{
$W_L\leftarrow W_L-\tau_L^k\nabla_{W_L}S(W_L)$,\;
\For{$l = L-1,\cdots,1$}{
$c_l\leftarrow c_l-\tau_l^k\nabla_{c_l}S(c_l)$,\;
Solve $W_l\begin{bmatrix}V_l & \sqrt{\lambda_l}I\end{bmatrix}=\begin{bmatrix}B_l & O\end{bmatrix}$ for $W_l$\;
Solve $b_l\Bt=c_l-W_lV_l$ for $b_l$\;}}
return $\{W_l,b_l,c_l\}$}
\caption{Solve $\min_{W_l,b_l,c_l} \mLs^{\mathrm{FNN}}$\label{alg01}}
\end{algorithm}

\subsection{Analysis}
Next, we investigate the decreasing property of Algorithm~\ref{alg01}. We first prove a basic inequality for functions with Lipschitz continuous gradients.

\begin{lemma}\label{lem3.2}
Let $f:\mathbb{R}^{m\times n}\to\mathbb{R}$ be continuously differentiable. Assume the gradient of $f$ is Lipschitz continuous with constant $C>0$. Then, for all $X,Y\in\mathbb{R}^{m\times n}$, it holds that
\begin{equation}\label{eq:lem3.2-1}
f(Y)\leq f(X)+\langle \nabla f(X),\,Y-X\rangle_\tF +\frac{C}{2}\|Y-X\|_\tF^2,
\end{equation}
where $\langle \cdot,\cdot\rangle_\tF$ is the Frobenius inner product.
\end{lemma}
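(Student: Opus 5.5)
The plan is to prove the descent lemma by the standard route: reduce to a one-dimensional function along the segment from $X$ to $Y$, apply the fundamental theorem of calculus, and control the remainder with the Lipschitz bound on $\nabla f$. Identify $\mathbb{R}^{m\times n}$ with $\mathbb{R}^{mn}$ equipped with the Frobenius inner product. In this identification $\langle\cdot,\cdot\rangle_\tF$ is the Euclidean inner product and $\|\cdot\|_\tF$ the Euclidean norm, so the Lipschitz assumption reads $\|\nabla f(U)-\nabla f(V)\|_\tF\le C\|U-V\|_\tF$.

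Fix $X,Y$ and define $g(t)=f(X+t(Y-X))$ for $t\in[0,1]$. Since $f$ is $C^1$, the chain rule gives that $g$ is continuously differentiable with
\[
g'(t)=\langle \nabla f(X+t(Y-X)),\,Y-X\rangle_\tF .
\]
By the fundamental theorem of calculus,
\[
f(Y)-f(X)=\int_0^1 g'(t)\,\mathrm{d}t=\langle \nabla f(X),Y-X\rangle_\tF+\int_0^1\langle \nabla f(X+t(Y-X))-\nabla f(X),\,Y-X\rangle_\tF\,\mathrm{d}t .
\]

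Next, bound the integrand by the Cauchy--Schwarz inequality for the Frobenius inner product, followed by the Lipschitz assumption:
\[
\langle \nabla f(X+t(Y-X))-\nabla f(X),\,Y-X\rangle_\tF\le \|\nabla f(X+t(Y-X))-\nabla f(X)\|_\tF\,\|Y-X\|_\tF\le Ct\|Y-X\|_\tF^2 .
\]
Integrating over $t\in[0,1]$ gives $\int_0^1 Ct\,\mathrm{d}t\,\|Y-X\|_\tF^2=\frac{C}{2}\|Y-X\|_\tF^2$, which yields \eqref{eq:lem3.2-1}.

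There is no real obstacle in this argument. The only points requiring care are that $g'$ is continuous, which follows from $f\in C^1$ and justifies the fundamental theorem of calculus, and that the Lipschitz constant is measured in the Frobenius norm, so that it pairs correctly with Cauchy--Schwarz.
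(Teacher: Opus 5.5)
Your proof is correct and is essentially the paper's argument: restrict $f$ to the segment via $t\mapsto f(X+t(Y-X))$, integrate its derivative over $[0,1]$, split off $\langle\nabla f(X),Y-X\rangle_\tF$, and bound the remainder by Cauchy--Schwarz and the Lipschitz condition to get $\frac{C}{2}\|Y-X\|_\tF^2$. You also note explicitly that continuity of $g'$ justifies the fundamental theorem of calculus, which the paper leaves implicit.
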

\begin{proof}
For $X,Y\in\mathbb{R}^{m\times n}$, define the function
\begin{equation*}
\varphi(t)=f(X+t(Y-X)),\quad t\in[0,1],
\end{equation*}
which satisfies $\varphi(0)=f(X)$ and $\varphi(1)=f(Y)$. Since $f$ is continuously differentiable, we can use the chain rule and obtain
\begin{equation}\label{eq:lem3.2-2}
\varphi'(t)=\left\langle \nabla f(X+t(Y-X)),Y-X\right\rangle_\tF.
\end{equation}
Integrating \eqref{eq:lem3.2-2} from $t=0$ to $1$ yields
\begin{multline}\label{eq:lem3.2-3}
f(Y)-f(X)=\int_0^1 \left\langle \nabla f(X+t(Y-X)),Y-X\right\rangle_\tF \dt\\
=\left\langle \nabla f(X),Y-X\right\rangle_\tF
+\int_0^1 \left\langle \nabla f(X+t(Y-X))-\nabla f(X),Y-X\right\rangle_\tF \dt.
\end{multline}
Applying the Cauchy-Schwarz inequality to the integral term and using the Lipschitz continuity of $\nabla f$, we obtain
\begin{equation}\label{eq:lem3.2-4}
\begin{aligned}
&\int_0^1 \left\langle \nabla f(X+t(Y-X))-\nabla f(X),Y-X\right\rangle_\tF \dt\\
\leq&\int_0^1 \|\nabla f(X+t(Y-X))-\nabla f(X)\|_\tF\|Y-X\|_\tF \dt\\
\leq&C\|Y-X\|_\tF^2\int_0^1 t \dt=\frac 12 C\|Y-X\|_\tF^2.
\end{aligned}
\end{equation}
Substituting \eqref{eq:lem3.2-4} into \eqref{eq:lem3.2-3} yields \eqref{eq:lem3.2-1}.
\end{proof}

\begin{lemma}\label{lem3.3}
Suppose $\sigma\in C^2(\mathbb R)$ and $|\sigma(z)|\leq M_0$, $|\sigma'(z)|\leq M_1$ and $|\sigma''(z)|\leq M_2$, $\forall z\in\mathbb{R}$, for some $M_0$, $M_1$, $M_2>0$. Let $R>0$ and assume that $\{W_l\}_{l=1}^L$ and $\{b_l,c_l\}_{l=1}^{L-1}$ are bounded by $R$ in Frobenius norm. Then there exist constants $C_l>0$, $l=1,\dots,L$, such that\\
(i) For each $l=1,\dots,L-1$, the gradient $\nabla_{c_l} S(c_l)$ defined by \eqref{eq:sec3.2-02} and \eqref{eq:sec3.2-4} is Lipschitz continuous, i.e.,
\begin{equation}\label{lem3.3-1}
\|\nabla_{c_l} S(\hat c_l)-\nabla_{c_l} S(\tilde c_l)\|_\tF
\leq C_l\|\hat c_l-\tilde c_l\|_\tF,
\qquad \forall \hat c_l, \tilde c_l\in\mathbb{R}^{M\times N},
\end{equation}
(ii) The gradient $\nabla_{W_L} S(W_L)$ defined by \eqref{eq:sec3.2-5} is Lipschitz continuous, i.e.,
\begin{equation}\label{eq:lem3.3-2}
\|\nabla_{W_L} S(\hat W_L)-\nabla_{W_L} S(\tilde W_L)\|_\tF
\leq C_L\|\hat W_L-\tilde W_L\|_\tF,
\qquad \forall \hat W_L, \tilde W_L\in\mathbb{R}^{J\times M}.
\end{equation}
\end{lemma}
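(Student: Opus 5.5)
The plan is to prove each Lipschitz estimate by splitting the gradient into its explicit pieces. Each piece is a product of factors that are either bounded or Lipschitz in the moving variable. We then use the elementary rule: if $f,g$ are bounded by $F,G$ and Lipschitz with constants $L_f,L_g$, then $fg$ is Lipschitz with constant $FL_g+GL_f$. This rule applies in Frobenius norm via $\|PQ\|_\tF\le\|P\|_\tF\|Q\|_\tF$ and $\|P\odot Q\|_\tF\le\|P\|_{\max}\|Q\|_\tF$.

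Throughout, the fixed quantities $W_j,b_j,c_j$ are bounded by $R$, hence so are $\omega_l\le R^{2(L-l)}$, $B_{l+1}$ and $\lambda$. Note that $\hat c_l,\tilde c_l$ and $\hat W_L,\tilde W_L$ range over all of the space. Hence no bound on the moving variable itself may be used, only the uniform bounds on $\sigma,\sigma',\sigma''$. The entrywise maps $\sigma$ and $\sigma'$ are Lipschitz with constants $M_1,M_2$, and $\|\sigma(c)\|_\tF\le M_0\sqrt{MN}$, $\|\sigma'(c)\|_{\max}\le M_1$ for every $c$.

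For (i) with $1\le l\le L-2$, consider \eqref{eq:sec3.2-02}. The term $\omega_l(c_l-W_lV_l-b_l\Bt)$ is Lipschitz with constant $\omega_l$. The term $\omega_{l+1}\,G(c_l)\odot\sigma'(c_l)$ has $G(c_l)=W_{l+1}^\top(W_{l+1}\sigma(c_l)-B_{l+1})$. Here $G$ is Lipschitz with constant $R^2M_1$ and bounded by $R(RM_0\sqrt{MN}+\|B_{l+1}\|_\tF)$ uniformly in $c_l$. Meanwhile $\sigma'(c_l)$ is bounded entrywise by $M_1$ and Lipschitz with constant $M_2$. The product rule then gives $C_l$.

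For $l=L-1$, use \eqref{eq:sec3.2-4}. The quadratic penalty part is again trivially Lipschitz. The main work is the first term $\big(W_L^\top(\mathcal P-A)\diag(\mathcal R)\big)\odot\sigma'(c_{L-1})$, which has four factors.
\begin{itemize}
\item The logits $Z=W_L\sigma(c_{L-1})$ are bounded by $RM_0\sqrt{MN}$ and are Lipschitz in $c_{L-1}$ with constant $RM_1$.
\item The map $\mathcal P$ is the columnwise softmax of $Z$. It is bounded, since its entries lie in $[0,1]$, and it is $1$-Lipschitz in $Z$, because the softmax Jacobian $\diag(p)-pp^\top$ has spectral norm at most $1$.
\item The vector $\mathcal R=\mL_{\mathrm{vec}}(Z,A)$ is $\sqrt2$-Lipschitz in $Z$ by Lemma \ref{lem:2.1}. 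It is bounded because $\ell(\bz,\bal)\le 2\|\bz\|_\infty+\ln J$, so $\|\mathcal R\|_2\le\sqrt N(2RM_0\sqrt{MN}+\ln J)$.
\item The factor $\sigma'(c_{L-1})$ is bounded by $M_1$ and Lipschitz with constant $M_2$.
\end{itemize}
Chaining the product rule over these factors yields $C_{L-1}$.

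For (ii), \eqref{eq:sec3.2-5} consists of $2\lambda W_L$, which is Lipschitz with constant $2\lambda$, plus $2(\mathcal P(W_L)-A)\diag(\mathcal R(W_L))\sigma(c_{L-1})^\top$. Now $Z=W_L\sigma(c_{L-1})$ is Lipschitz in $W_L$ with constant $M_0\sqrt{MN}$, so $\mathcal P$ and $\mathcal R$ are Lipschitz in $W_L$ as above.

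The main obstacle is boundedness. Since $W_L$ ranges over all of $\mathbb R^{J\times M}$, the factor $\mathcal R(W_L)$ is unbounded: it grows linearly in $\|W_L\|$. A naive product argument therefore fails. I would try two fixes. The first is to interpret the statement, as the hypothesis intends, with $\hat W_L,\tilde W_L$ in the ball of radius $R$; then $\|\mathcal R\|_2\le\sqrt N(2RM_0\sqrt{MN}+\ln J)$ and the product rule closes. The second is a direct bound on the Hessian of $\tfrac12\|\mathcal R\|^2$, namely $\mathcal R\,\nabla^2\ell+\nabla\ell\,\nabla\ell^\top$. The product $\ell\cdot\|\nabla^2\ell\|$ stays bounded, since large $\ell$ forces $p$ near a vertex where $\nabla^2\ell$ decays exponentially. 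This would give a global constant, and I would attempt it if the ball restriction is deemed insufficient.
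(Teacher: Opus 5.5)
Your proposal matches the paper's proof. It uses the same product-rule splitting with the same ingredients: softmax $1$-Lipschitz, $\mathcal{R}$ $\sqrt{2}$-Lipschitz by Lemma~\ref{lem:2.1}, $\|\mathcal{R}\|_\infty\le 2\|\bz\|_\infty+\ln J$, and bounded, Lipschitz $\sigma'$. In (ii) the paper's constant $C_L=2\lambda+2NMM_0^2(2\sqrt{M}M_0R+\ln J+2)$ silently uses $\|W_L\|_\tF\le R$ for the moving variable, which is exactly your ball restriction.

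Drop your second fix, though, because for $J\ge 3$ a large loss does not force $p$ to a vertex. With label $1$ and $\bz=(-t,0,0)^\top$ one has $\ell\approx t+\ln 2$, yet $v^\top\nabla^2\ell\,v\approx 1$ for $v=(0,1,-1)^\top$. Each sample contributes nonnegative curvature $2\ell\,\nabla^2\ell+2\nabla\ell\,\nabla\ell^\top\succeq 0$, so whenever $\sigma(c_{L-1})\neq 0$, $\nabla_{W_L}S$ is in fact not Lipschitz on all of $\mathbb{R}^{J\times M}$. The ball restriction is therefore necessary, not merely convenient.
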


\begin{proof}
We first consider the case $1\leq l\leq L-2$. For convenience, define $G(c_l)=W_{l+1}^\top(W_{l+1}\sigma(c_l)-B_{l+1})$. According to \eqref{eq:sec3.2-02}, for all $\hat c_l,\tilde c_l\in\mathbb{R}^{M\times N}$, we have
\begin{equation}\label{eq:lem3.3-3}
\|\nabla_{c_l} S(\hat c_l)-\nabla_{c_l} S(\tilde c_l)\|_\tF\leq2(\omega_{l+1}\|G(\hat c_l)\odot\sigma'(\hat c_l)-G(\tilde c_l)\odot\sigma'(\tilde c_l)\|_\tF
+\omega_{l}\|\hat c_l-\tilde c_l\|_\tF).
\end{equation}
According to the boundedness of $\sigma'$ and $\sigma''$, using the mean value theorem entrywise yields
\begin{equation}\label{eq:lem3.3-4}
\begin{aligned}
&\|G(\hat c_l)\odot\sigma'(\hat c_l)-G(\tilde c_l)\odot\sigma'(\tilde c_l)\|_\tF\\
\leq&\|G(\hat c_l)-G(\tilde c_l)\|_\tF\|\sigma'(\hat c_l)\|_\infty+\|G(\tilde c_l)\|_\tF\|\sigma'(\hat c_l)-\sigma'(\tilde c_l)\|_\tF\\
\leq& M_1\|G(\hat c_l)-G(\tilde c_l)\|_\tF+M_2\|G(\tilde c_l)\|_\tF\|\hat c_l-\tilde c_l\|_\tF\\
\leq&(M_1^2\|W_{l+1}\|_\tF^2+M_0M_2\sqrt{MN}\|W_{l+1}\|_\tF^2+M_2\|W_{l+1}^\top B_{l+1}\|_\tF)\|\hat c_l-\tilde c_l\|_\tF.
\end{aligned}
\end{equation}
Here $\|\cdot\|_\infty$ is the entrywise max norm. Combining \eqref{eq:lem3.3-3}-\eqref{eq:lem3.3-4}, we obtain \eqref{lem3.3-1} for $l=1,\ldots,L-2$.

The proof for $S(c_{L-1})$ is similar. For simplicity, let $c:=c_{L-1}$ and
\begin{equation*}
H(c):=W_L^\top\big(\mathcal{P}(c)-A\big)\diag(\mathcal{R}(c)),
\end{equation*}
where $\mathcal{P}(c)$ and $\mathcal{R}(c)$ are defined by \eqref{eq:sec3.2-03}. For any $\hc,\tilde c\in\mathbb{R}^{M\times N}$, according to \eqref{eq:lem1-3}, we obtain that the softmax map $p$: $\mathbb {R}^J\to\mathbb {R}^J$ is 1-Lipschitz in $\ell_2$. Thus,
\begin{equation}\label{eq:lem3.3-5}
\|\mathcal{P}(\hat c)-\mathcal{P}(\tilde c)\|_\tF
\leq \|W_L\sigma(\hat c)-W_L\sigma(\tilde c)\|_\tF
\leq M_1\|W_L\|_\tF\|\hat c-\tilde c\|_\tF.
\end{equation}
Since each column of $\mathcal{P}(c)$ is a probability vector and each column of $A$ is a one-hot vector, according to \eqref{eq:lem1-4}, it implies
\begin{equation}\label{eq:lem3.3-6}
\|\mathcal{P}(c)-A\|_\tF^2=\sum_{n=1}^N \|p_n-\bal_n\|_2^2 \leq 2N.
\end{equation}
Moreover, note that $\mathcal R(c) = \mL_{\mathrm{vec}}(W_L\sigma(c),A)\in\mathbb R^N$, using Lemma \ref{lem:2.1} and the boundedness of $\sigma$ and $\sigma'$, we can derive
\begin{equation}\label{eq:lem3.3-7}
\|\mathcal{R}(\hc)-\mathcal{R}(\tilde c)\|_2\leq\sqrt{2}\|W_L(\sigma(\hc)-\sigma(\tilde c))\|_\tF\leq\sqrt{2}M_1\|W_L\|_\tF\|\hc-\tilde c\|_\tF,
\end{equation}
and
\begin{equation}\label{eq:lem3.3-8}
\|\mathcal{R}(c)\|_\infty\leq2\sqrt{M}\|W_L\|_\tF\|\sigma(c)\|_\infty+\ln J\leq2 \sqrt{M}M_0\|W_L\|_\tF+\ln J.
\end{equation}
Then, using \eqref{eq:lem3.3-7}-\eqref{eq:lem3.3-6}, it follows that
\begin{equation}\label{eq:lem3.3-9}
\begin{aligned}
&\|H(\hc)-H(\tilde c)\|_\tF\\
=&\|W_L^\top\left((\mathcal{P}(\hc)-\mathcal{P}(\tilde c))\diag(\mathcal{R}(\hc))+(\mathcal{P}(\tilde c)-A)\diag(\mathcal{R}(\hc)-\mathcal{R}(\tilde c))\right)\|_\tF\\
\leq&\|W_L\|_\tF\left(\|\mathcal{P}(\hc)-\mathcal{P}(\tilde c)\|_\tF\|\mathcal{R}(\hc)\|_\infty+\|\mathcal{P}(\tilde c)-A\|_\tF\|\mathcal{R}(\hc)-\mathcal{R}(\tilde c)\|_2\right)\\
\leq&M_1\|W_L\|_\tF^2(2\sqrt{M}M_0\|W_L\|_\tF+\ln J+2\sqrt{N})\|\hc-\tilde c\|_\tF,
\end{aligned}
\end{equation}
and
\begin{equation}\label{eq:lem3.3-10}
\|H(c)\|_\infty\leq\|W_L\|_\tF\|\mathcal{P}(c)-A\|_\tF\|\mathcal{R}(c)\|_\infty\leq\sqrt{2N}\|W_L\|_\tF(2 \sqrt{M}M_0\|W_L\|_\tF+\ln J).
\end{equation}
Using the expression of $\nabla_{c_{L-1}} S(c_{L-1})$ and \eqref{eq:lem3.3-9}-\eqref{eq:lem3.3-10}, we obtain
\begin{equation*}
\begin{aligned}
&\|\nabla_{c_{L-1}} S(\hc)-\nabla_{c_{L-1}} S(\tilde c)\|_\tF\\
\leq&2\|H(\hc)\odot\sigma'(\hc)-H(\tilde c)\odot\sigma'(\tilde c)\|_\tF+2\|W_L\|_\tF^2\|\hc-\tilde c\|_\tF\\
\leq& 2M_1\|H(\hc)-H(\tilde c)\|_\tF+2M_2\|H(\tilde c)\|_\infty\|\hc-\tilde c\|_\tF+2\|W_L\|_\tF^2\|\hc-\tilde c\|_\tF\\
\leq& C_{L-1}\|\hc-\tilde c\|_\tF,
\end{aligned}
\end{equation*}
where $C_{L-1}=2\|W_L\|_\tF\Big((M_1^2\|W_L\|_\tF+\sqrt{2N}M_2)(2\sqrt{M}M_0\|W_L\|_\tF+\ln J)+\|W_L\|_\tF(1+2\sqrt{N}M_1^2)\Big)$.

Finally, for $W_L$, similar to \eqref{eq:lem3.3-5}-\eqref{eq:lem3.3-8}, according to the boundedness of $\sigma$, Lemma~\ref{lem:2.1} and \eqref{eq:lem1-3}, we obtain that $\mathcal R(W_L)$ and $\mathcal{P}(W_L)$ are Lipschitz continuous and bounded. Therefore,
\begin{equation*}
\|\nabla_{W_L} S(\hat W_L)-\nabla_{W_L} S(\tilde W_L)\|_\tF\leq C_L\|\hat W_L-\tilde W_L\|_\tF,
\end{equation*}
where $C_L=2\lambda+2NMM_0^2(2\sqrt{M}M_0R+\ln J+2)$.
\end{proof}

Next, we present the decreasing loss property of the alternating minimization algorithm.

\begin{theorem}\label{thm3.4}
Under the hypothesis of Lemma \ref{lem3.3}, let $\{C_l\}_{l=1}^{L}$ be the Lipschitz constants given in Lemma \ref{lem3.3}. If the learning rates $\tau_l^k\in(0,\frac{2}{C_l}]$, it holds
\begin{equation}\label{eq:thm3.4-0}
S(\Theta^{k+1})\leq S(\Theta^k),\qquad \forall k\geq 0,
\end{equation}
where $\Theta^k=\{W_L^k,W_l^k,b_l^k,c_l^k\}_{l=1}^{L-1}$ is the set of parameters after the $k$-th update in Algorithm \ref{alg01}.
\end{theorem}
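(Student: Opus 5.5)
The plan is to split one outer iteration $k\to k+1$ of Algorithm~\ref{alg01} into its elementary substeps and show that none of them increases $S$. Chaining these inequalities across the substeps gives \eqref{eq:thm3.4-0}. The substeps, in order, are: the gradient step on $W_L$; then, for $l=L-1,\dots,1$, the gradient step on $c_l$, the least-squares update of $W_l$, and the least-squares update of $b_l$. Each substep changes only one block of variables with the others fixed. So it suffices to compare $S$ before and after that substep, viewed as a function of the active block alone (the notation $S(x)$ of Section~\ref{Algorithm:FNN}).

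\emph{Gradient substeps.} Take the $W_L$ update, with $f(W_L)=S(W_L)$. By Lemma~\ref{lem3.3}(ii), $\nabla f$ is Lipschitz with constant $C_L$. Apply Lemma~\ref{lem3.2} with $X=W_L$ and $Y=W_L-\tau\nabla f(W_L)$ to get
\[
S(Y)\le S(X)-\tau\Big(1-\frac{C_L\tau}{2}\Big)\|\nabla f(X)\|_\tF^2\le S(X)
\]
whenever $\tau\in(0,2/C_L]$. The $c_l$ steps are handled identically using Lemma~\ref{lem3.3}(i) with constant $C_l$.

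One point needs care here. The constants $C_l$ in Lemma~\ref{lem3.3} depend on the current values of the other blocks, which are fixed during the substep. They are uniform under the standing assumption that all iterates stay in the ball of radius $R$. Lemma~\ref{lem3.3} gives global Lipschitz bounds in the active variable ($\hat c_l,\tilde c_l\in\mathbb{R}^{M\times N}$ arbitrary), so the segment between $X$ and $Y$ need not be bounded separately. For $W_L$, the bound in the lemma uses $R$, so I would read the hypothesis as requiring the iterates to stay bounded by $R$. In any case, the backtracking rule $S(c_l-\tau\nabla S)\le S(c_l)$ enforces monotonicity directly, and the theorem's condition $\tau\in(0,2/C_l]$ guarantees that such a step is accepted.

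\emph{Least-squares substeps.} For $W_l$ with $l\le L-1$, I will check that $S(W_l)$ equals the objective \eqref{eq:sec3.2-01}, up to a positive factor and an additive constant independent of $W_l$. The factor is $\omega_l=\prod_{j>l}\|W_j\|_\tF^2$, which does not involve $W_l$. The terms with index $i<l$ carry weights $\omega_i$ that contain $\|W_l\|_\tF^2$ linearly, and collecting them gives $\omega_l\lambda_l\|W_l\|_\tF^2$. Since \eqref{eq:wl} computes an exact minimizer of \eqref{eq:sec3.2-01}, the new $S$ is at most the old one. For $b_l$, only the term $\omega_l\|W_lV_l+b_l\Bt-c_l\|_\tF^2$ depends on $b_l$ (the weights do not involve $b$). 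The least-squares solution \eqref{eq:bl} minimizes it exactly, so again $S$ does not increase.

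The main obstacle I expect is the bookkeeping in the $W_l$ case: verifying that the coefficient of $\|W_l\|_\tF^2$ collected from all lower layers is exactly $\omega_l\lambda_l$, including the edge cases $l=1$ (where $\lambda_1=0$) and $l=2$. A secondary point is making sure the Lipschitz constants apply at the current iterate, which requires the boundedness assumption to hold along the iteration. Once these are settled, the theorem follows by concatenating the per-substep inequalities over one outer sweep.
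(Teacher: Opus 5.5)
Your proposal is correct and follows the paper's proof essentially step for step. You split one sweep of Algorithm~\ref{alg01} into the gradient substeps for $W_L$ and $c_l$, which are handled by Lemma~\ref{lem3.2}, Lemma~\ref{lem3.3} and $\tau\le 2/C_l$, and the exact least-squares substeps for $W_l$ and $b_l$, and you chain the inequalities; your extra checks (the collected coefficient of $\|W_l\|_\tF^2$ equals $\omega_l\lambda_l$, and $C_L$ is valid only on the $R$-ball, so the iterates must stay there) fill in details the paper leaves implicit rather than changing the argument.
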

\begin{proof}
Fix $k\geq 0$. By Lemma~\ref{lem3.3}(ii), $\nabla_{W_L} S(W_L)$ is Lipschitz continuous with constant $C_L$. Updating $W_L$ by gradient descent yields
\begin{equation*}
W_L^{k+1}=W_L^k-\tau_{L}^k\nabla_{W_L} S(W_L^k).
\end{equation*}
Then, according to Lemma \ref{lem3.2}, we obtain
\begin{multline*}
S(W_L^{k+1})\leq S(W_L^k)-\tau_{L}^k\|\nabla_{W_L} S(W_L^k)\|_\tF^2+\frac{C_L}{2}(\tau_{L}^k)^2\|\nabla_{W_L} S(W_L^k)\|_\tF^2\\
=S(W_L^k)-\tau_{L}^k\Big(1-\frac{C_L}{2}\tau_{L}^k\Big)\|\nabla_{W_L} S(W_L^k)\|_\tF^2.
\end{multline*}
Because $0<\tau_{L}^k\leq \frac{2}{C_L}$, it follows that
\begin{equation}\label{eq:thm3.4-1}
S(W_L^{k+1})\leq S(W_L^k).
\end{equation}

Next, for $l=1,\dots,L-1$, recall that $c_l$ is updated by
\begin{equation*}
c_l^{k+1}:=c_l^{k}-\tau_{l}^k\nabla_{c_l} S(c_l^k).
\end{equation*}
By a similar argument, using Lemma \ref{lem3.2} and $0<\tau_{l}^k\leq \frac{2}{C_l}$, we obtain
\begin{equation}\label{eq:thm3.4-2}
S(c_l^{k+1})\leq S(c_l^k)-\tau_{l}^k\Big(1-\frac{C_l}{2}\tau_{l}^k\Big)
\|\nabla_{c_l} S(c_l^k)\|_\tF^2\leq S(c_l^k).
\end{equation}

For $\{W_l,b_l\}_{l=1}^{L-1}$, they are updated as the least squares solutions of the linear systems \eqref{eq:wl}-\eqref{eq:bl}. So,
\begin{equation}\label{eq:thm3.4-3}
S(W_l^{k+1})\leq S(W_l^{k}) \quad \text{and} \quad S(b_l^{k+1})\leq S(b_l^{k}).
\end{equation}

Finally, combining \eqref{eq:thm3.4-1}-\eqref{eq:thm3.4-3}, we conclude \eqref{eq:thm3.4-0}
\end{proof}

\section{Convolutional neural network}\label{sec:3}
In this section, we consider the CNN case, where the neurons are formulated by convolutions and pooling instead of dense matrices. For simplicity, we describe the single-channel case, i.e., the feature maps in the input and hidden layers are single-channel; the multi-channel case is analogous and will be presented in the numerical experiments.

Suppose $X\in\mathbb{R}^{m\times n}$ is an input sample. Then $\mathrm{vec}(X)\in\mathbb{R}^{d}$, where $d=mn$. Assume that the CNN consists of $L_1$ convolution-pooling layers and $L_2$ fully connected layers, i.e.,
\begin{align}\label{eq:phi-CNN}
\phi_{\mathrm{CNN}}(X;\theta):=\hW_{L_2}\sigma(\hW_{L_2-1}\sigma(\cdots\sigma(\hW_1P_{L_1}\sigma(K_{L_1}\sigma(\cdots P_1\sigma(K_1\mathrm{vec}(X)+b_1)\nonumber\\
\cdots)+ b_{L_1})+ \hb_{1}\Big)
\cdots)+ \hb_{L_2-1}),
\end{align}
where $\{K_l\}_{l=1}^{L_1}$ and $\{P_l\}_{l=1}^{L_1}$ are sparse matrices determined by the convolution operator and the pooling operator; $\{b_l\}_{l=1}^{L_1}$ are the corresponding bias terms for the convolutional layers; $\{\hW_l\}_{l=1}^{L_2}$ are the weights of the fully connected layers; $\{\hb_l\}_{l=1}^{L_2-1}$ are the corresponding biases; $\theta=\{K_l,b_l,\hW_l,\hb_l\}$ is the set of all free parameters.

Given a training dataset $\{(X_n,\bal_n)\}_{n=1}^N$, where $X_n\in\mathbb{R}^{m\times n}$ is the input sample and $\bal_n\in\mathbb{R}^J$ is the corresponding one-hot label of $X_n$. For simplicity, denote 
$$\hat X:=[\mathrm{vec}(X_1)~\ldots~\mathrm{vec}(X_N)],\quad \phi_{\mathrm{CNN}}(\hat X;\theta):=[\phi_{\mathrm{CNN}}(X_1;\theta)~\ldots~\phi_{\mathrm{CNN}}(X_N;\theta)].$$
Consider the cross-entropy loss defined by \eqref{eq:sec2-03} and let $Z=\phi_{\mathrm{CNN}}(\hat X;\theta)$. The cross-entropy model for CNN can be rewritten as follows:
\begin{equation}\label{loss-CNN}
\min_{\theta}\mL(\phi_{\mathrm{CNN}}(\hat X;\theta),A):=\frac{1}{N}\Big\|-\diag_v(A^\top(\phi_{\mathrm{CNN}}(\hat X;\theta)))+\ln\sum_{k=1}^Je^{\phi_{\mathrm{CNN}}(\hat X;\theta)(k,:)}\Big\|_1.
\end{equation}

\subsection{Layer separation model with auxiliary variables}
To separate the layers of the CNN, we introduce the auxiliary variables
\begin{equation*}
c_l=
\begin{cases}
K_lP_{l-1}\sigma(c_{l-1})+b_l\Bt,\quad l=2,\ldots,L_1,\\
K_1\hat X+b_1\Bt,\quad l=1,
\end{cases}
\end{equation*}
and
\begin{equation*} 
\hc_l=
\begin{cases}
\hW_l\sigma(\hc_{l-1})+\hb_l\Bt,\quad l=2,\ldots,L_2-1,\\
\hW_1P_{L_1}\sigma(c_{L_1})+\hb_1\Bt,\quad l=1.
\end{cases}
\end{equation*}
Then the layer separation model is given by
\begin{equation}\label{LySep-loss-CNN}
\min_{\Theta}\mLs^{\mathrm{CNN}}:=\frac{1}{\sqrt{N}}T(\Theta)^{\frac 12}, 
\end{equation}
where $\Theta=\{K_l,b_l,c_l,\hW_l,\hb_l,\hc_l\}$ collects all parameters, and
\begin{multline*}
T(\Theta)=\Big\|-\diag_v(A^\top(\hW_{L_2}\sigma(\hc_{L_2-1})))+\ln\sum_{k=1}^Je^{\hW_{L_2}(k,:)\sigma(\hc_{L_2-1})}\Big\|_2^2\\
+\sum_{l=2}^{L_1}\omega_l\|K_lP_{l-1}\sigma(c_{l-1})+b_l\Bt-c_l\|_\tF^2+\omega_1\|K_1\hat X+b_1\Bt-c_1\|_\tF^2\\
+\sum_{l=2}^{L_2-1}\hat{\omega}_l\|\hW_l\sigma(\hc_{l-1})+\hb_l\Bt-\hc_l\|_\tF^2+\hat{\omega}_1\|\hW_1P_{L_1}\sigma(c_{L_1})+\hb_1\Bt-\hc_1\|_\tF^2,
\end{multline*}
with specially designed adaptive weights 
\begin{align*}
&\omega_{L_1}=\prod_{k=2}^{L_2}\|\hW_k\|_\tF^2\|\hW_1P_{L_1}\|_\tF^2,\\
&\omega_l=\omega_{L_1}\prod_{j=l+1}^{L_1}\|K_jP_{j-1}\|_\tF^2,\quad \text{for}~ l=1,\ldots,L_1-1,\\
&\hat{\omega}_l=\prod_{k=l+1}^{L_2}\|\hW_k\|_\tF^2, \quad\text{for}~ l=1,\ldots,L_2-1.
\end{align*}

The following result guarantees that the original cross-entropy loss $\mL(\phi_{\mathrm{CNN}}, A)$ is always bounded above by the layer separation loss $\mLs^{\mathrm{CNN}}$. The proof is similar to that of Theorem \ref{thm3.1}, so we do not present it here. 

\begin{theorem}\label{thm4.1}
Suppose $\sigma$ is Lipschitz continuous with constant $B>0$. Let $\mL(\phi_{\mathrm{CNN}},A)$ and $\mLs^{\mathrm{CNN}}$ be defined by \eqref{loss-CNN} and \eqref{LySep-loss-CNN}, respectively. Then for all $\{K_l,b_l,c_l\}_{l=1}^{L_1}$ and $\{\hW_{L_2},\hW_l,\hb_l,\hc_l\}_{l=1}^{L_2-1}$, it holds that
\begin{equation}\label{eq:thm3.4-00}
\mL(\phi_{\mathrm{CNN}},A)\leq C_{B} \cdot \sqrt{L_1+L_2-1} \cdot \mLs^{\mathrm{CNN}},
\end{equation}
where $C_{B}=\max\{\sqrt{2},2B,2B^{L_1+L_2-1}\}$.
\end{theorem}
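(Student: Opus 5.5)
The plan is to copy the proof of Theorem \ref{thm3.1}, treating the CNN as one chain of $L_1+L_2-1$ hidden residuals. First, apply \eqref{eq:sec2-07} from Remark \ref{rem1} with $Z_1=\phi_{\mathrm{CNN}}(\hat X;\theta)$ and $Z_2=\hW_{L_2}\sigma(\hc_{L_2-1})$. This gives
\begin{equation*}
\mL^2(\phi_{\mathrm{CNN}},A)\leq\frac1N\Big(2\|\mathcal{R}\|_2^2+4\|\phi_{\mathrm{CNN}}-\hW_{L_2}\sigma(\hc_{L_2-1})\|_\tF^2\Big),
\end{equation*}
where $\mathcal{R}$ is the first term of $T(\Theta)$.

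Next, bound the consistency error by unrolling the network. Let $\hat{\mathcal T}_l$ be the difference between the true $l$-th fully connected preactivation and $\hc_l$, and let $\mathcal T_l$ be the same for the convolutional preactivation and $c_l$. Write $\hat r_l$ and $r_l$ for the corresponding residuals inside $T(\Theta)$; for example $r_l=K_lP_{l-1}\sigma(c_{l-1})+b_l\Bt-c_l$ and $\hat r_1=\hW_1P_{L_1}\sigma(c_{L_1})+\hb_1\Bt-\hc_1$. We use $\|MY\|_\tF\leq\|M\|_\tF\|Y\|_\tF$ together with the entrywise Lipschitz bound $\|\sigma(U)-\sigma(V)\|_\tF\leq B\|U-V\|_\tF$. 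With these, the error is at most $B\|\hW_{L_2}\|_\tF\|\hat{\mathcal T}_{L_2-1}\|_\tF$, and for the hidden layers
\begin{align*}
\|\hat{\mathcal T}_l\|_\tF&\leq B\|\hW_l\|_\tF\|\hat{\mathcal T}_{l-1}\|_\tF+\|\hat r_l\|_\tF,\\
\|\hat{\mathcal T}_1\|_\tF&\leq B\|\hW_1P_{L_1}\|_\tF\|\mathcal T_{L_1}\|_\tF+\|\hat r_1\|_\tF,\\
\|\mathcal T_l\|_\tF&\leq B\|K_lP_{l-1}\|_\tF\|\mathcal T_{l-1}\|_\tF+\|r_l\|_\tF,\qquad \mathcal T_1=r_1.
\end{align*}
The key point is that the pooling matrices are grouped with the next linear map, as $\hW_1P_{L_1}$ or $K_jP_{j-1}$, instead of being bounded separately. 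This matches the products in the adaptive weights. Unrolling the recursion gives
\begin{equation*}
\|\phi_{\mathrm{CNN}}-\hW_{L_2}\sigma(\hc_{L_2-1})\|_\tF\leq\sum_{l=1}^{L_2-1}B^{L_2-l}\hat\omega_l^{1/2}\|\hat r_l\|_\tF+\sum_{l=1}^{L_1}B^{L_2+L_1-l}\omega_l^{1/2}\|r_l\|_\tF.
\end{equation*}
The step I expect to need the most care is checking that the products of norms produced by the unrolling are exactly $\omega_l^{1/2}$ and $\hat\omega_l^{1/2}$ as defined. In particular $\omega_{L_1}$ must contain $\|\hW_1P_{L_1}\|_\tF^2$ and all $\|\hW_k\|_\tF^2$ for $k\geq2$, and $\omega_l$ must add the factors $\|K_jP_{j-1}\|_\tF^2$ for $j>l$.

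Finally, the right-hand side is a sum of $L_1+L_2-1$ terms, so Cauchy--Schwarz gives a bound by $(L_1+L_2-1)\sum B^{2e}\,\omega\|r\|_\tF^2$ over the same terms. Every exponent $e$ satisfies $1\leq e\leq L_1+L_2-1$, hence $B^{2e}\leq\max\{B^2,B^{2(L_1+L_2-1)}\}$. Putting this back into the first inequality, the coefficient on $\|\mathcal{R}\|_2^2$ is $2\leq C_B^2$. The coefficient on each weighted residual is at most $4(L_1+L_2-1)\max\{B^2,B^{2(L_1+L_2-1)}\}\leq (L_1+L_2-1)C_B^2$, with $C_B=\max\{\sqrt2,2B,2B^{L_1+L_2-1}\}$. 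Since $L_1+L_2-1\geq1$, both coefficients are at most $C_B^2(L_1+L_2-1)$, so $\mL^2\leq C_B^2(L_1+L_2-1)\,T(\Theta)/N$. Taking square roots gives \eqref{eq:thm3.4-00}.
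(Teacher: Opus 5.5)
Your proposal is correct and follows the paper's approach. The paper omits this proof as analogous to Theorem~\ref{thm3.1}, and your argument is that adaptation: apply \eqref{eq:sec2-07}, unroll the residual recursion with the pooling matrices absorbed into $\hW_1P_{L_1}$ and $K_jP_{j-1}$ so the products are exactly $\omega_l^{1/2}$ and $\hat\omega_l^{1/2}$, then apply Cauchy--Schwarz over the $L_1+L_2-1$ residual terms with exponents $1\le e\le L_1+L_2-1$, with the constant bookkeeping done correctly.
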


\subsection{Algorithm for the CNN}

In this subsection, we develop an alternating minimization algorithm to minimize $\mLs^{\mathrm{CNN}}$. The loss function is quadratic with respect to $\{K_l,b_l\}_{l=1}^{L_1}$ and $\{\hat W_l,\hat b_l\}_{l=1}^{L_2-1}$, which allows closed-form least-squares updates for these parameters. Also, due to the nonlinearity of the activation function $\sigma$, we update $\{\hW_{L_2},c_l,\hc_l\}$ by gradient descent with a backtracking line search. Since minimizing $\mLs^{\mathrm{CNN}}(\Theta)$ is equivalent to minimizing $T(\Theta)$, we work with $T(\Theta)$ in the sequel.

The gradient descent update for $\{\hW_{L_2},c_l,\hc_l\}$ and the closed-form update for $\{\hW_l,\hb_l,b_l\}$ are analogous to those in Section \ref{Algorithm:FNN}. For $K_l$, it corresponds to a convolution kernel, 
so $K_l$ has a parameter-sharing structure. In every iteration, we update $K_l$ with other parameters fixed by solving the linear systems
\begin{equation}\label{eq:4.2-1}
K_l\begin{bmatrix}U_l & \sqrt{\lambda_l}P_{l-1}\end{bmatrix}=\begin{bmatrix}B_l & O\end{bmatrix},
\end{equation}
where 
\begin{equation*}
U_l=\begin{cases}
P_{l-1}\sigma(c_{l-1}),& 2\leq l\leq L_1,\\
\hat X,& l=1,\end{cases}\qquad
B_l=c_l-b_l\Bt, \quad 1\leq l\leq L_1.
\end{equation*}
and $\lambda_1=0$, $\lambda_2=\|K_1U_1-B_1\|_\tF^2$, $
\lambda_l=\sum_{i=1}^{l-2}\left(\prod_{j=i+1}^{l-1}\|K_jP_{j-1}\|_\tF^2\right)\|K_iU_i-B_i\|_\tF^2+\|K_{l-1}U_{l-1}-B_{l-1}\|_\tF^2$($3\leq l\leq L_1-1$)
are all fixed.
Despite the parameter sharing, \eqref{eq:4.2-1} imposes linear constraints on the parameters of $K_l$, so \eqref{eq:4.2-1} can be reformulated as
a modified linear system
\begin{equation}\label{eq:4.2-2}
A_lk_l = \begin{bmatrix}B_l & O\end{bmatrix},
\end{equation}
where $A_l$ is a matrix composed of the elements in $[U_l~\sqrt{\lambda_l}P_{l-1}]$, and $k_l$ is the vector consisting of all independent parameters of $K_l$.

To sum up, we present the algorithm that updates $\{\hW_l,\hb_l, K_l,b_l,c_l,\hc_l\}$ alternatively for every $l$ from large to small (see Algorithm \ref{alg02}). 
\begin{algorithm}
\DontPrintSemicolon
\KwIn{input data $\hat X$; label $A$; number of iterations $N_k$}
\KwOut{a feasible solution $\{\hW_l,\hb_l, K_l,b_l,c_l,\hc_l\}$.}
\Begin{Initialize $\hW_{L_2}$, $\{\hW_l,\hb_l,\hc_l\}_{l=1}^{L_2-1}$ and $\{K_l,b_l,c_l\}_{l=1}^{L_1}$\;
\For{$k = 0,\cdots,N_k-1$}{
$\hW_{L_2}\leftarrow \hW_{L_2}-\hat\tau_{L_2}^k\nabla_{\hW_{L_2}}T(\Theta)$,\;
\For{$l = L_2-1,\cdots,1$}{
$\hc_l\leftarrow \hc_l-\hat\tau_l^k\nabla_{\hc_l}T(\Theta)$,\;
Solve the linear system for $\hW_l$ obtained in \eqref{LySep-loss-CNN}\;
Solve the linear system for $\hb_l$ obtained in \eqref{LySep-loss-CNN}\;}
\For{$l = L_1,\cdots,1$}{
$c_l\leftarrow c_l-\tau_l^k\nabla_{c_l}T(\Theta)$\;
Solve \eqref{eq:4.2-2} for $k_l$, and then recover $K_l$\;
Solve the linear system for $b_l$ obtained in \eqref{LySep-loss-CNN}\;}}
return $\{\hW_l,\hb_l, K_l,b_l,c_l,\hc_l\}$}
\caption{Solve $\min_{\{\hW_l,\hb_l, K_l,b_l,c_l,\hc_l\}} \mLs^{\mathrm{CNN}}$\label{alg02}}
\end{algorithm}

Next, we investigate the decreasing property of Algorithm~\ref{alg02}. The proof of the following theorem is analogous to that of the corresponding results in Section~\ref{Algorithm:FNN}. Since the only modification is replacing the fully connected linear block with the structured convolutional block $K_l$, we omit the repetitive details and state the results directly.

\begin{theorem}
Suppose $\sigma\in C^2(\mathbb R)$ and $\sigma$, $\sigma'$ and $\sigma''$ are bounded functions. Let $\hat R>0$ and assume that $\hW_{L_2}$, $\{\hW_l,\hb_l,\hc_l\}_{l=1}^{L_2-1}$ and $\{K_l,b_l,c_l\}_{l=1}^{L_1}$ are bounded by $\hat R$ in Frobenius norm. Then there exist constants $C_{l}>0$ for $l=1,\ldots,L_1$ and $\hat C_{l}>0$ for $l=1,\ldots,L_2$, depending on $\hat R$ and $\sigma$ such that\\
(i) For each $l=1,\dots,L_1$, $\nabla_{c_l} T(c_l)$ is Lipschitz continuous with constant $C_{l}$.\\
(ii) For each $l=1,\dots,L_2-1$, $\nabla_{\hc_l} T(\hc_l)$ is Lipschitz continuous with constant $\hat C_{l}$.\\
(iii) The gradient $\nabla_{\hW_{L_2}} T(\hW_{L_2})$ is Lipschitz continuous with constant $\hat C_{L_2}$.\\
Moreover, if the learning rates $\tau_l^k\in(0,\frac{2}{C_{l}}]$ and $\hat\tau_l^k\in(0,\frac{2}{\hat C_{l}}]$, it holds
\begin{equation*}
T(\Theta^{k+1})\leq T(\Theta^k),\qquad \forall k\geq 0,
\end{equation*}
where $\Theta^k=\{\hW_l^k,\hb_l^k, K_l^k,b_l^k,c_l^k,\hc_l^k\}$ is the set of parameters after the $k$-th update in Algorithm \ref{alg02}.
\end{theorem}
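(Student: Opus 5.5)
The proof will mirror Lemma \ref{lem3.3} and Theorem \ref{thm3.4}, with the convolutional blocks treated as fixed linear maps in each block-coordinate step. Each auxiliary variable appears in exactly two terms of $T(\Theta)$, so the gradients share the structure of \eqref{eq:sec3.2-02} and \eqref{eq:sec3.2-4}:
\begin{itemize}
\item for $c_l$ with $1\le l\le L_1-1$, the data-coupling matrix $W_{l+1}$ of \eqref{eq:sec3.2-02} is replaced by $K_{l+1}P_l$;
\item for $c_{L_1}$, it is replaced by $\hW_1P_{L_1}$;
\item for $\hc_l$ with $1\le l\le L_2-2$, it is $\hW_{l+1}$;
\item for $\hc_{L_2-1}$, the gradient has exactly the softmax form \eqref{eq:sec3.2-4}, with $W_L$ replaced by $\hW_{L_2}$.
\end{itemize}

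For (i), and for (ii) with $l\le L_2-2$, I would write $\nabla_{c_l}T=2[\omega_{l+1}(G(c_l)\odot\sigma'(c_l))+\omega_l(c_l-\cdots)]$. Here $G(c)=\mathcal{K}^\top(\mathcal{K}\sigma(c)-B)$, with $\mathcal{K}$ the relevant linear block. I would then repeat \eqref{eq:lem3.3-3}--\eqref{eq:lem3.3-4} verbatim. This uses $\|\mathcal{K}\|_\tF\le\|K_{l+1}\|_\tF\|P_l\|_\tF$, and the pooling matrices are fixed, so this is bounded in terms of $\hat R$. The weights $\omega_l,\hat\omega_l$ are products of such norms and are therefore bounded by a constant depending on $\hat R$, $L_1$, $L_2$ and the fixed $P_j$. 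The result is a Lipschitz constant
\[
C_l=2\big(\omega_{l+1}(M_1^2\|\mathcal{K}\|_\tF^2+M_0M_2\sqrt{MN}\|\mathcal{K}\|_\tF^2+M_2\|\mathcal{K}^\top B\|_\tF)+\omega_l\big).
\]
For $\hc_{L_2-1}$ and for $\hW_{L_2}$ (part (iii)), I would reuse the second half of the proof of Lemma \ref{lem3.3}. Specifically, I would use the 1-Lipschitz property of softmax from \eqref{eq:lem1-3}, the bound \eqref{eq:lem3.3-6}, the Lipschitz estimate of $\mathcal{R}$ from Lemma \ref{lem:2.1}, and the sup bound \eqref{eq:lem3.3-8}. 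This gives \eqref{eq:lem3.3-9}--\eqref{eq:lem3.3-10} with $W_L\to\hW_{L_2}$ and $c_{L-1}\to\hc_{L_2-1}$. For $\hW_{L_2}$, the regularization coefficient $\lambda$ now collects all penalty terms divided by $\|\hW_{L_2}\|_\tF^2$, and it is again bounded via $\hat R$.

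For the monotonicity, I would fix $k$ and follow the sweep order of Algorithm \ref{alg02}. For each gradient step on $\hW_{L_2}$, $\hc_l$ or $c_l$, Lemma \ref{lem3.2} applied to the partial function gives
\[
T^{\rm new}\le T^{\rm old}-\tau(1-\tfrac{C}{2}\tau)\|\nabla\|_\tF^2\le T^{\rm old},
\]
since $\tau\le 2/C$. For $\hW_l$, $\hb_l$ and $b_l$, the partial objective is a convex quadratic (least squares), so the exact least-squares solution does not increase $T$.

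For $K_l$, I need that \eqref{eq:4.2-2} is exactly the least-squares problem for the partial objective restricted to the parameter-sharing subspace. The map $k_l\mapsto K_l$ is linear, so the partial objective is a convex quadratic in $k_l$. The old $k_l$ is feasible, so the minimizer gives a value no larger than before.

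Chaining all these inequalities over the sweep yields $T(\Theta^{k+1})\le T(\Theta^k)$.

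The main obstacle is the dependence of the adaptive weights on other blocks. When $K_l$ or $\hW_l$ changes, the weights $\omega_j$, $\hat\omega_j$ of lower layers change too. So I must check that the partial objective in $K_l$ really has the form $\|K_lU_l-B_l\|^2+\lambda_l\|K_lP_{l-1}\|^2$ after factoring out the upper weights, i.e. that \eqref{eq:4.2-1} is correct. I would verify this first by expanding $T$ and grouping the terms containing $\|K_lP_{l-1}\|_\tF^2$ as a common factor, exactly as in \eqref{eq:sec3.2-01}. A second, milder concern is that the Lipschitz constants require the iterates to stay in the $\hat R$-ball. I would take this, as the hypothesis does, to be assumed along the iteration.
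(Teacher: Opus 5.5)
Your proposal is correct and takes the same route the paper indicates. The paper omits the proof, saying only that it is analogous to Lemma \ref{lem3.3} and Theorem \ref{thm3.4} with the fully connected block replaced by the structured convolutional block, and that is exactly what you carry out. Your planned check of \eqref{eq:4.2-1} also goes through: for $i<l$ each $\omega_i$ contains $\|K_lP_{l-1}\|_\tF^2$ as a factor, so the $K_l$-dependent part of $T$ is precisely $\omega_l\big(\|K_lU_l-B_l\|_\tF^2+\lambda_l\|K_lP_{l-1}\|_\tF^2\big)$.
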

\section{Numerical Experiments}\label{sec:4}

In this section, we compare the standard cross-entropy models with the proposed layer separation models across three classification tasks. Our main objective is to assess the optimization behavior of the proposed methods under different network settings. Accordingly, we report the training cross-entropy loss values and classification accuracy throughout the iterative process.

The common experimental settings are summarized as follows.
\begin{itemize}
\item {\em Environment for cross-entropy model.}
Cross-entropy models are implemented in Python with the PyTorch library. The training objective is to reduce the cross-entropy loss. Standard gradient descent solves the optimization with fine-tuned learning rates.

\item {\em Environment for layer separation model.}
Layer separation models with FNNs and CNNs are solved by Algorithms \ref{alg01} and \ref{alg02}, respectively. They are implemented in Matlab, where the subroutines {\tt mldivide} (i.e., \textbackslash ) or {\tt mrdivide} (i.e., /) are called to solve the linear least squares subproblems arising in the alternating updates.

\item {\em Initialization of variables.}
The network parameters are randomly initialized using a common distribution in deep learning.

\item {\em Evaluation metrics.}
We evaluate model performance through classification accuracy
\begin{equation*}
\mathrm{Acc}=\frac{1}{N}\sum_{n=1}^{N}\mathbf{1}\{\hat{\bal}_n=\bal_n\},
\end{equation*}
where $\hat{\bal}_n$ and $\bal_n$ denote the predicted and true class labels, respectively.

\item {\em Randomness.}
To reduce the effect of initialization randomness, each experiment is repeated 10 times with different random seeds. This is equivalent to taking different initial guesses for the optimization. We present the mean and standard deviation (shown as ``mean $\pm$ standard deviation'') of the results of our numerical experiments in the tables below. 
\end{itemize}

Since the two models are implemented in different software environments, the comparisons below are intended to assess optimization behavior and stability with respect to random initialization, rather than raw wall-clock efficiency across implementations. For ease of presentation, throughout this section, CE-FNN and CE-CNN refer to the conventional softmax cross-entropy training formulations for FNNs and CNNs, respectively, whereas LySep-FNN and LySep-CNN refer to the corresponding layer-separation models, respectively.

\subsection{Binary classification with FNNs}
\subsubsection{Circle-in/out classification}
In this first example, we consider a circle-in/out classification problem on the domain $\Omega=[-1,1]^2$. The sample points are generated uniformly at random in $\Omega$ and are labeled according to whether they lie inside a circle centered at the origin. For a balanced classification, we set the circle radius $r=\sqrt{\frac{2}{\pi}}$ such that the area of the circle equals one-half of $\Omega$. The label is defined by $\bal_n=\mathbf{1}_{\{\|x_n\|_2\leq r\}}$. The training and testing sets contain 3000 and 1000 samples, respectively. The label distribution of the test set is shown in Figure~\ref{Fig_1}. To examine the influence of the network architecture, we test the following depth-width combinations $(L,M)=(3,10)$, $(3,16)$, $(3,20)$, $(10,10)$ and $(20,10)$ and use the $\tanh$ activation function. For each combination, we record the cross-entropy loss and training accuracy over $10^4$ iterations to compare the optimization behavior of the two models.
\begin{figure}[!ht]
\centering
\includegraphics[scale=0.14]{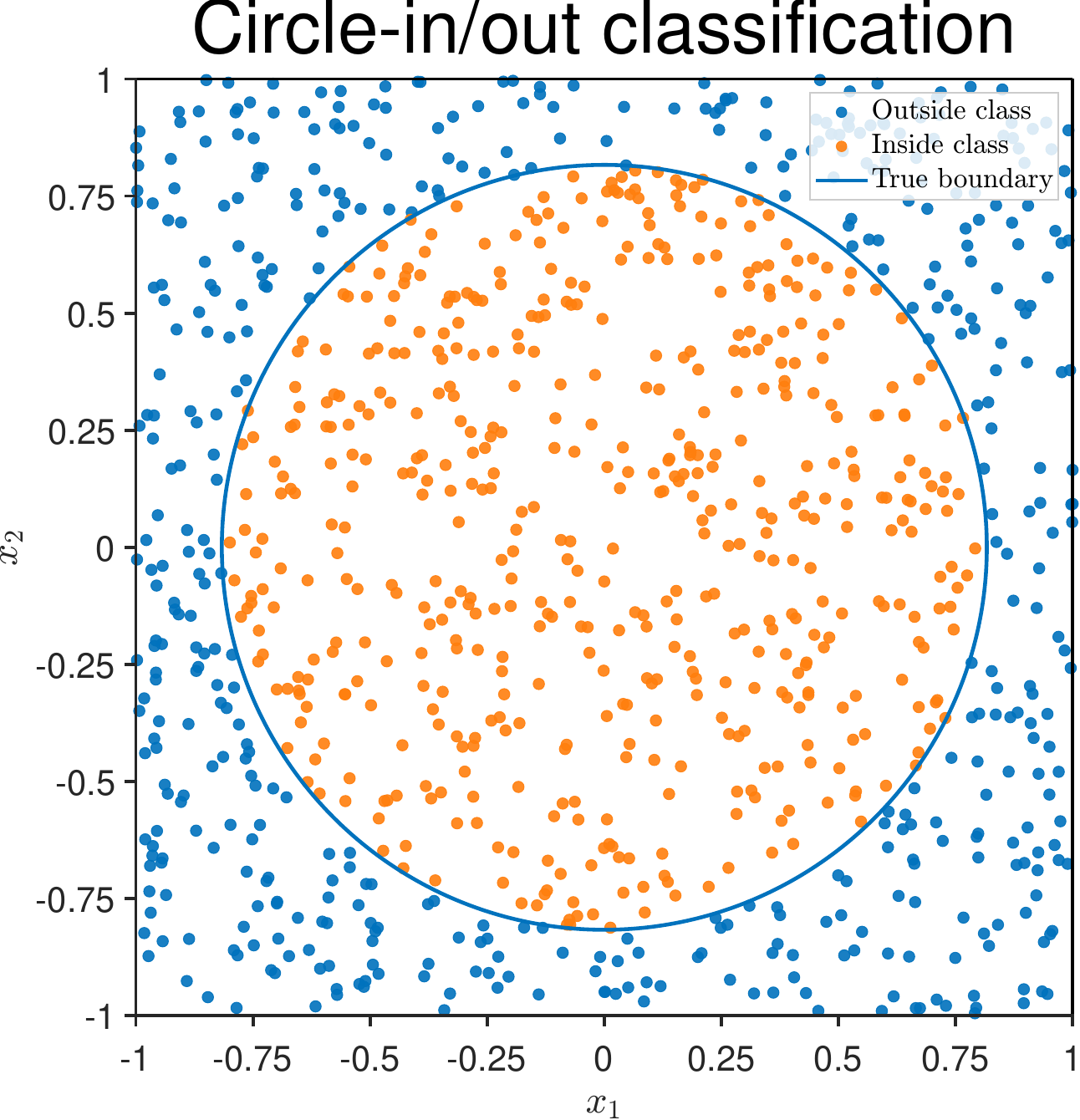}
\caption{\em Test set label distribution for the circle-in/out classification task in Example~1. The circle denotes the true decision boundary.}
\label{Fig_1}
\end{figure}

\begin{table}[!ht]
\fontsize{7}{10.5}\selectfont
\setlength{\tabcolsep}{6pt}
\centering
\begin{tabular}{clcc}
\toprule
$(L,M)$ & Models & Cross-entropy loss & Accuracy \\
\midrule
\multirow{2}{*}{$(3,10)$}
& CE-FNN    & 5.27e-02$\pm$6.01e-03 & 99.10\%$\pm$0.29\% \\
& LySep-FNN & 6.75e-03$\pm$3.00e-03 & 99.83\%$\pm$0.15\% \\
\midrule
\multirow{2}{*}{$(3,16)$}
& CE-FNN    & 8.03e-02$\pm$4.51e-03 & 98.77\%$\pm$0.31\% \\
& LySep-FNN & 3.63e-03$\pm$1.11e-03 & 99.96\%$\pm$0.05\% \\
\midrule
\multirow{2}{*}{$(3,20)$}
& CE-FNN    & 7.62e-02$\pm$5.68e-03 & 99.03\%$\pm$0.30\% \\
& LySep-FNN & 3.46e-03$\pm$8.56e-04 & 99.99\%$\pm$0.02\% \\
\midrule
\multirow{2}{*}{$(10,10)$}
& CE-FNN    & 2.55e-01$\pm$3.03e-01 & 84.09\%$\pm$20.52\% \\
& LySep-FNN & 5.56e-03$\pm$1.90e-03 & 99.86\%$\pm$0.13\% \\
\midrule
\multirow{2}{*}{$(20,10)$}
& CE-FNN    & 6.92e-01$\pm$1.52e-03 & 52.27\%$\pm$1.56\% \\
& LySep-FNN & 3.39e-02$\pm$1.16e-02 & 99.40\%$\pm$0.33\% \\
\bottomrule
\end{tabular}
\caption{\em Final training cross-entropy loss and training accuracy for Example~1. Results are reported as mean $\pm$ standard deviation over $10$ seeds.}
\label{Tab_Case1_train_meanstd}
\end{table}
\begin{figure}[!ht]
\centering
\subfloat[$L=3,M=10$]{\includegraphics[scale=0.17]{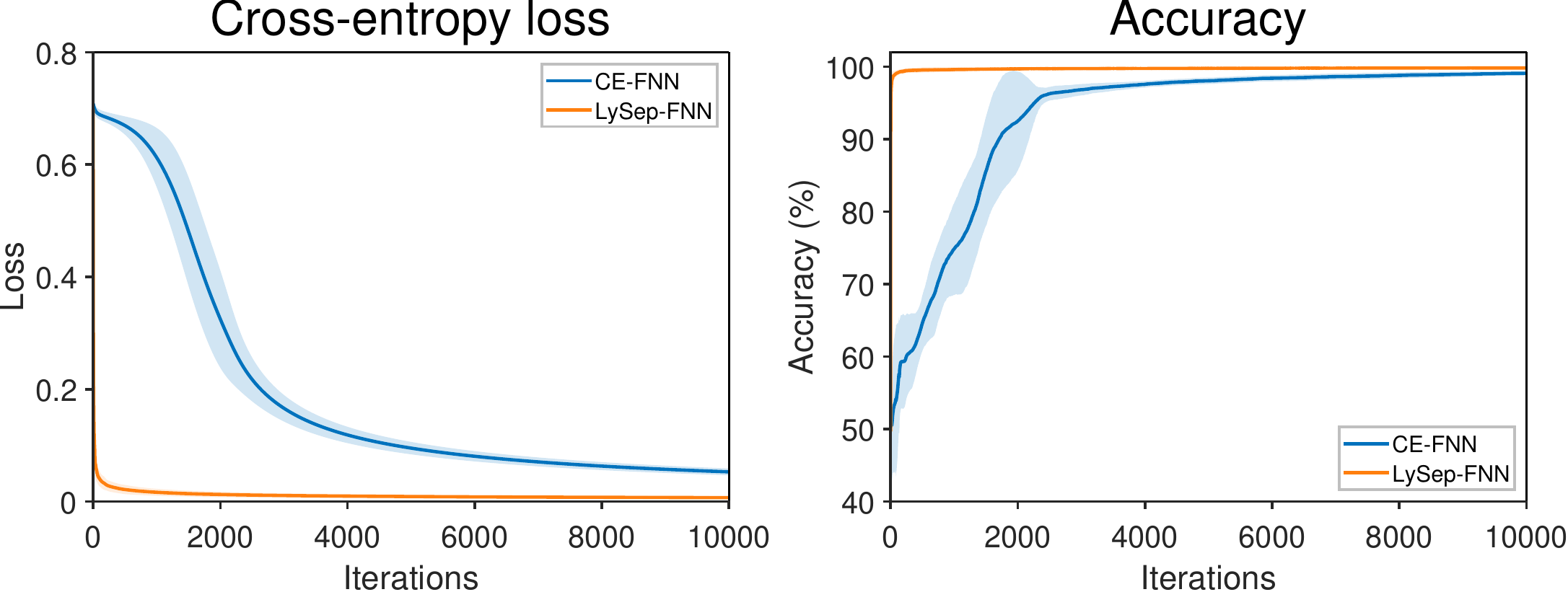}}
\subfloat[$L=20,M=10$]{\includegraphics[scale=0.17]{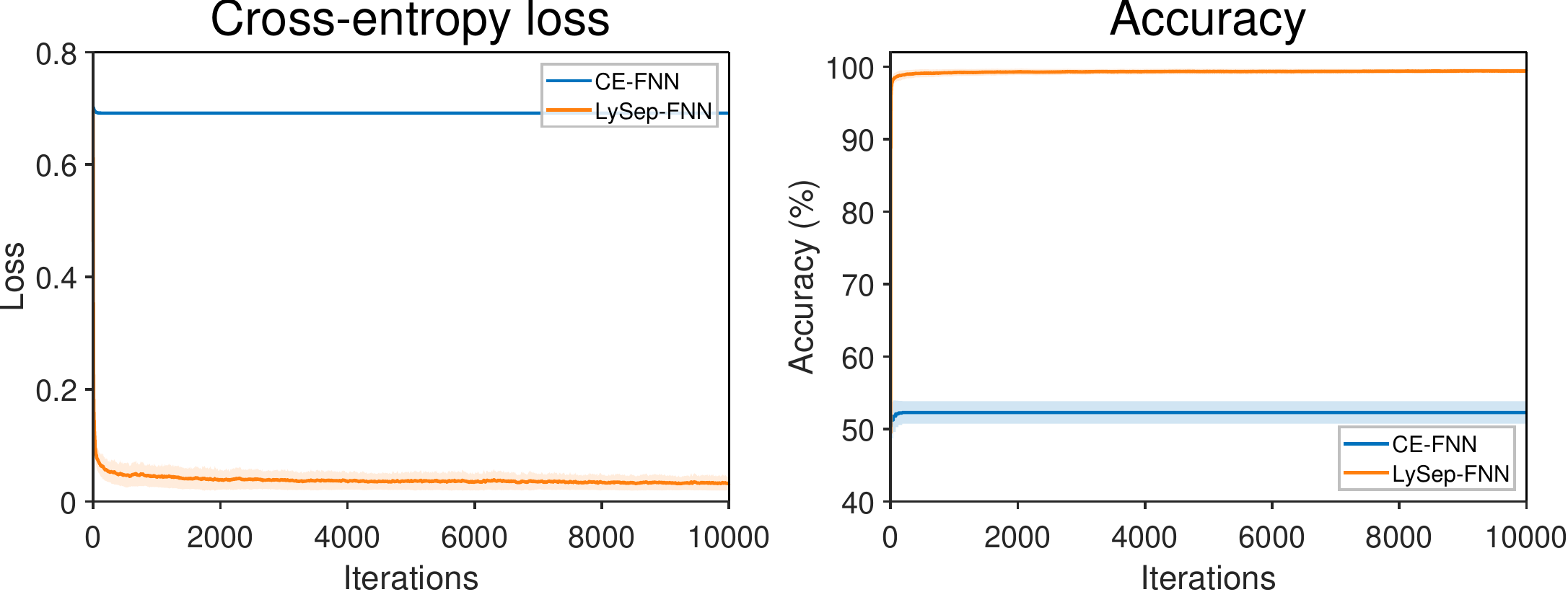}}
\caption{\em Mean training cross-entropy loss and training accuracy versus iterations for Example~1. Shaded bands indicate one standard deviation over $10$ seeds.}
\label{Fig_2}
\end{figure}
\begin{figure}[!ht]
\centering
\subfloat[$L=3,M=10$]{
\includegraphics[scale=0.13]{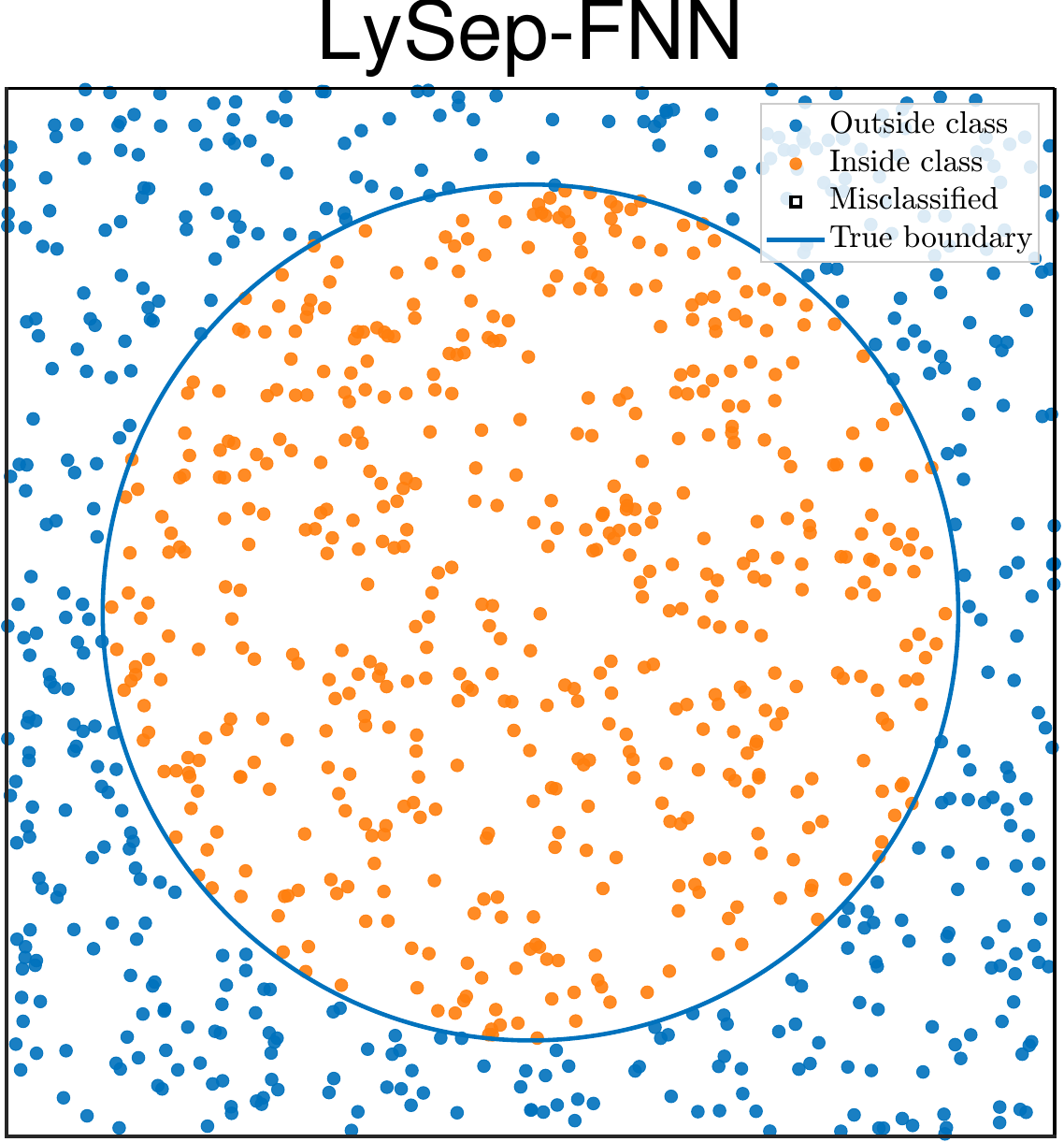}\quad
\includegraphics[scale=0.13]{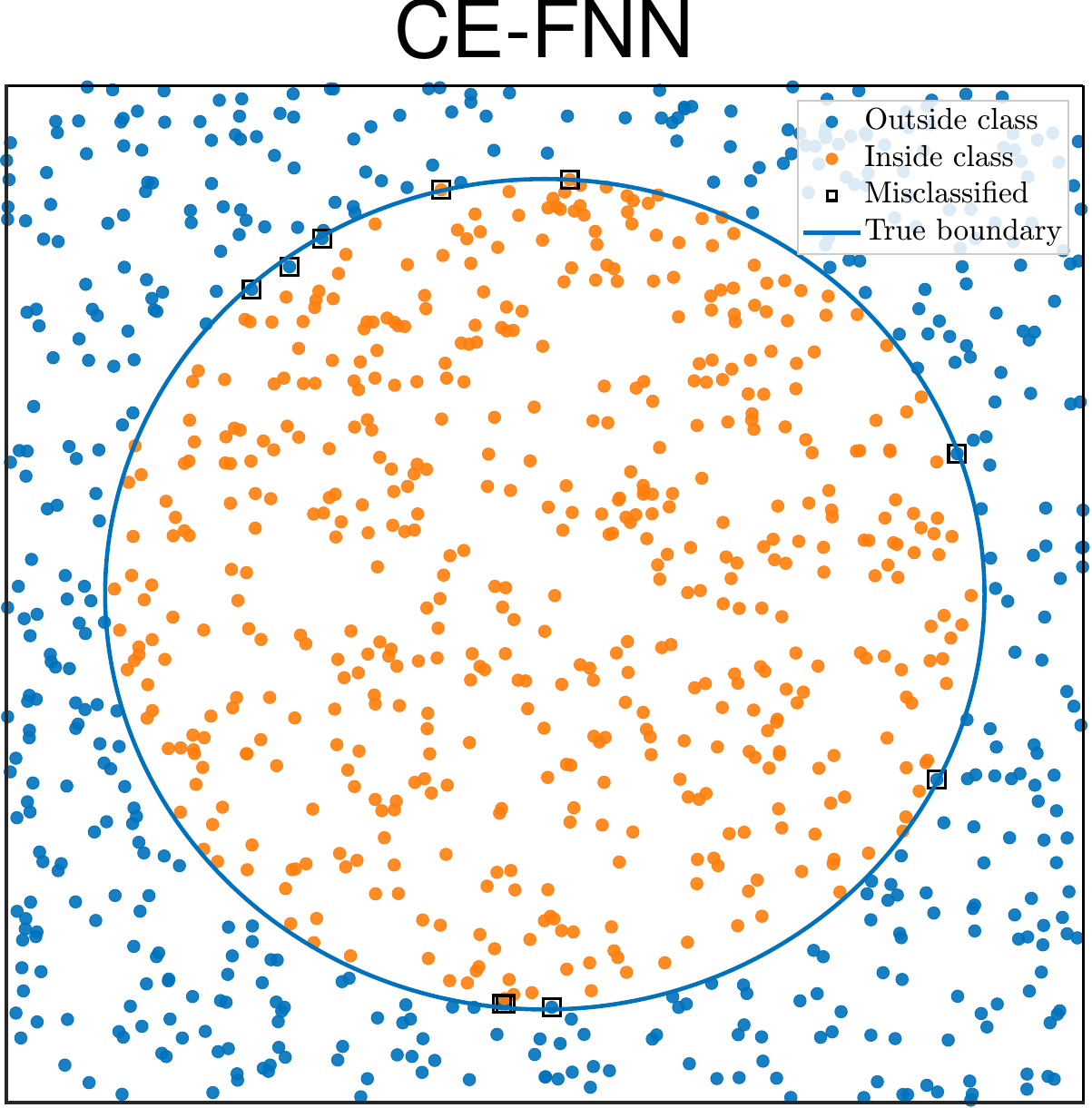}}\quad
\subfloat[$L=20,M=10$]{
\includegraphics[scale=0.13]{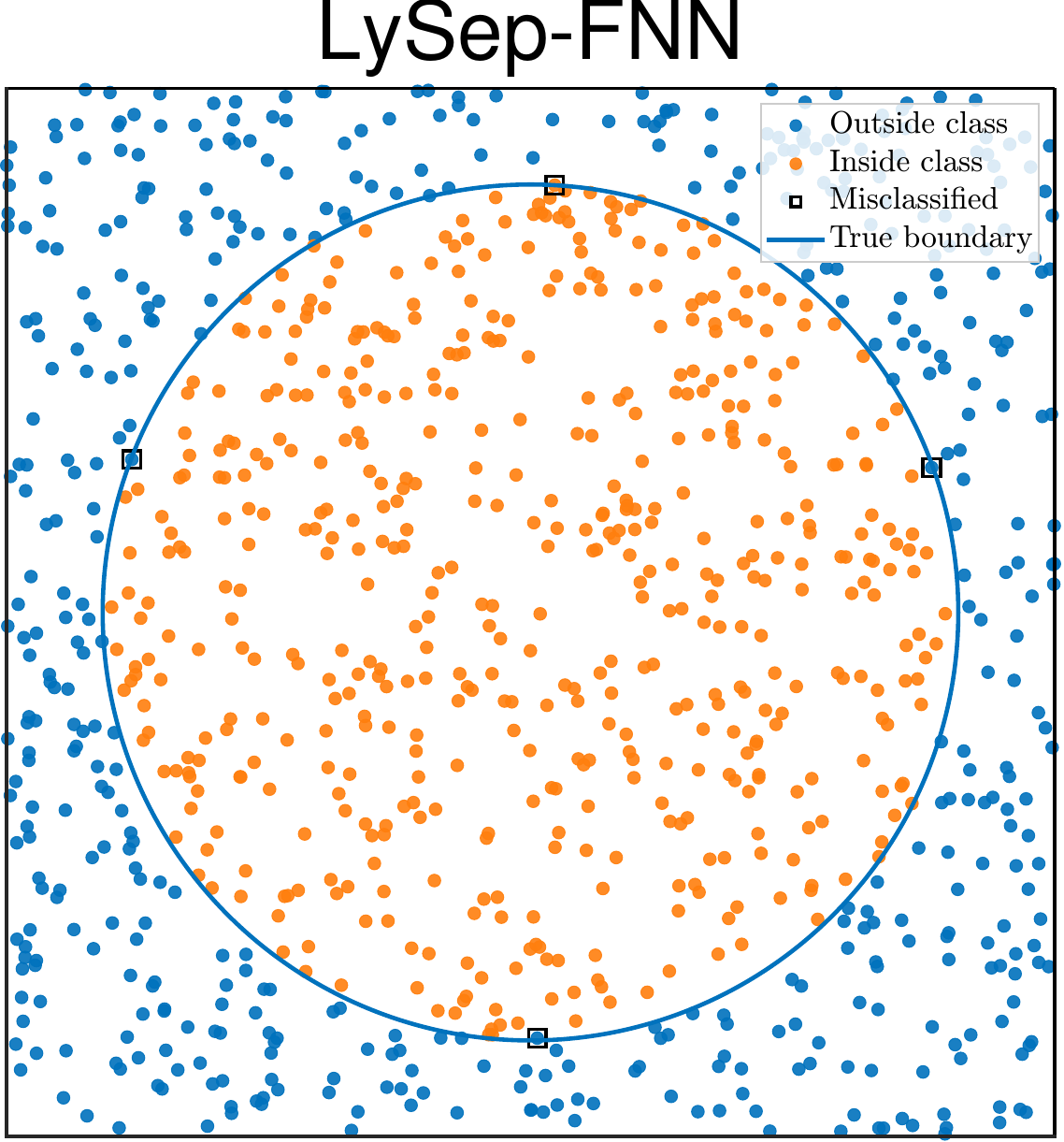}\quad
\includegraphics[scale=0.13]{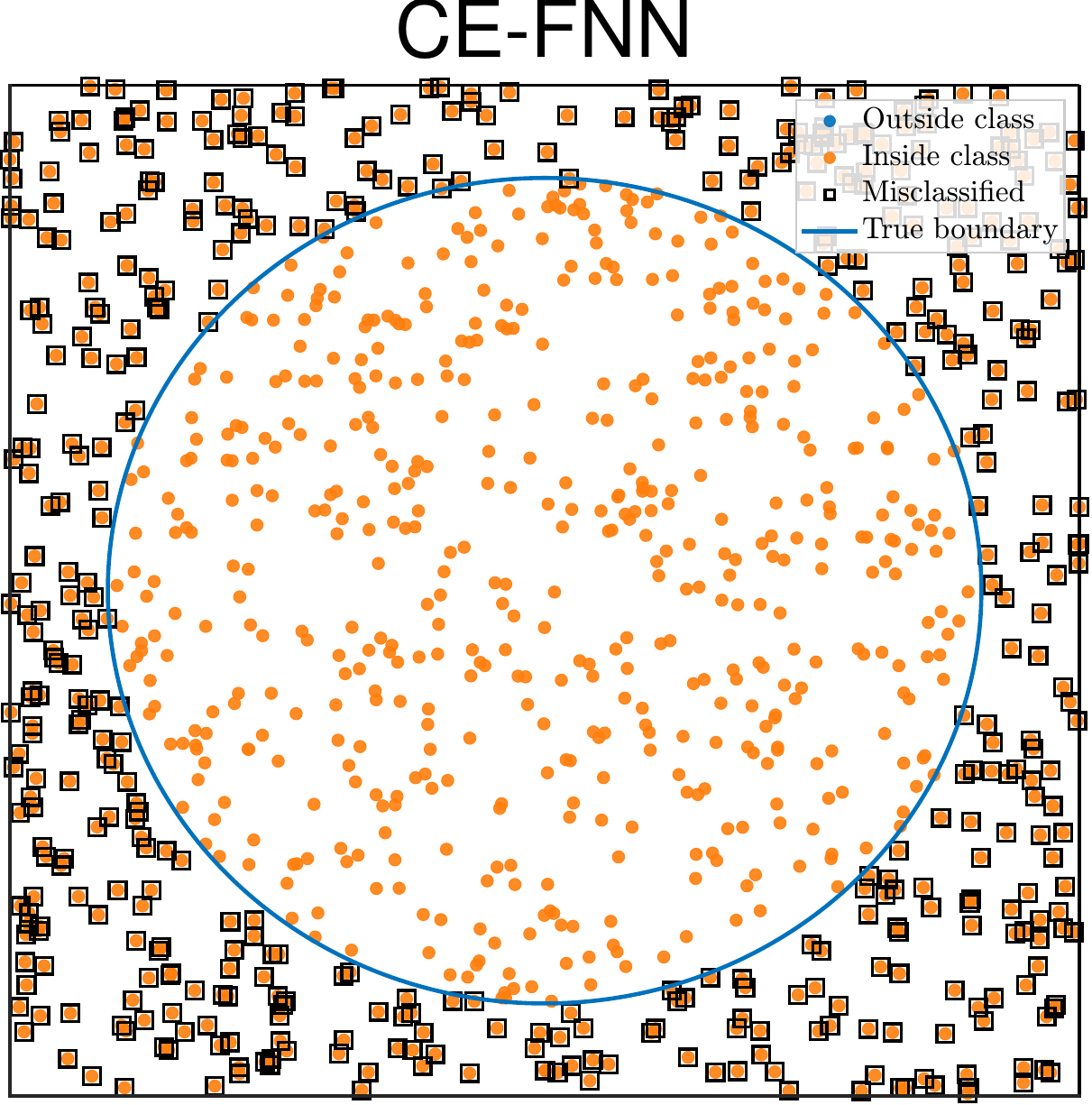}}
\caption{\em Classification results on the test set for Example 1. Black squares indicate misclassified samples. (Results are from the best seed.)}
\label{Fig_3}
\end{figure}

Table~\ref{Tab_Case1_train_meanstd} and Figures~\ref{Fig_2}-\ref{Fig_3} show that LySep-FNN consistently outperforms CE-FNN across all tested depth-width combinations in both training loss and training accuracy. For shallow networks $(L,M)=(3,10)$, $(3,16)$, and $(3,20)$, both methods achieve high training accuracy, but LySep-FNN achieves a lower final loss, converges faster, reaches high training accuracy earlier, and exhibits a smaller standard deviation, indicating better stability compared to random initialization. As the network depth increases, the advantage of LySep-FNN becomes more pronounced. In particular, for $(L,M)=(10,10)$ and $(20,10)$,  CE-FNN exhibits significant optimization difficulties, with final training accuracies decreasing to $84.09\%$ and $52.27\%$, respectively, while LySep-FNN maintains an accuracy above $99\%$ in both cases. Meanwhile, the final training loss of LySep-FNN is approximately $\mathcal{O}(10^{-3})$ and $\mathcal{O}(10^{-2})$, while CE-FNN remains at the level of $\mathcal{O}(10^{-1})$. For the deep network $(L,M)=(20,10)$, Figure~\ref{Fig_2} further shows that LySep-FNN still converges rapidly and stably, while CE-FNN exhibits optimization stagnation, with its loss values remaining nearly at the initial level and training accuracy approaching random levels. This is further confirmed by Figure~\ref{Fig_3}, where LySep-FNN closely matches the true label distribution, whereas CE-FNN produces many misclassified samples. In summary, these results demonstrate that LySep-FNN achieves better optimization performance, stability, and robustness, especially for deep architectures.

\subsubsection{Interface tracking}
In this example, we consider the following Allen-Cahn equation on $\Omega=[-0.5,0.5]^2$ equipped
with periodic boundary condition:
\begin{equation*}
\begin{cases}
u_t=\Delta u+\frac{1}{\varepsilon^2}(u-u^3),\qquad &(x,y)\in\Omega,\ t\in(0,1],\\
u(x,y,0)=u_0,\qquad &(x,y)\in\Omega.
\end{cases}
\end{equation*}
We set $\varepsilon=0.02$ and choose the initial value
\begin{equation*}
u_0(x,y)=
-\tanh\left(\frac{\sqrt{x^2+(y-0.2)^2}-0.19}{\sqrt{2}\varepsilon}\right)
-\tanh\left(\frac{\sqrt{x^2+(y+0.2)^2}-0.19}{\sqrt{2}\varepsilon}\right)+1.
\end{equation*}
We consider the solution interface $u=0$ at $t=0$, $0.001$, $0.004$ and $0.015$. At each time slice, we solve the interface-in/out classification problem using CE-FNN and LySep-FNN. The sample points are selected uniformly in $\Omega$ and labeled according to $\bal_n=\mathbf{1}_{\{u(x_n,y_n,t)\ge 0\}}$. The training and testing sets contain $3000$ and $1000$ samples, respectively. We apply LySep-FNN and CE-FNN to this PDE-induced classification problem and train both models for $10^4$ iterations. To examine the effect of network architecture, we consider the depth-width combinations $(L,M)=(3,30)$, $(10,20)$, and $(20,20)$, and use the $\tanh$ activation function. As in the previous example, we record the cross-entropy loss and training accuracy throughout the training process to evaluate the optimization performance of the two models.

\begin{table}[!ht]
\fontsize{7}{10.5}\selectfont
\setlength{\tabcolsep}{6pt}
\centering
\begin{tabular}{cclcc}
\toprule
$(L,M)$ &$t$& Models & Cross-entropy loss & Accuracy \\
\midrule
\multirow{8}{*}{$(3,30)$}&\multirow{2}{*}{$0$}
& CE-FNN    & 1.50e-01$\pm$1.05e-02 & 94.52\%$\pm$0.47\%\\
&& LySep-FNN & 1.67e-02$\pm$1.96e-03 & 99.43\%$\pm$0.10\% \\\cmidrule(lr){2-5}
&\multirow{2}{*}{$0.001$}
& CE-FNN    &1.25e-01$\pm$9.05e-03  &95.27\%$\pm$0.57\%  \\
&& LySep-FNN & 1.41e-02$\pm$2.73e-03 & 99.48\%$\pm$0.15\% \\\cmidrule(lr){2-5}
&\multirow{2}{*}{$0.004$}
& CE-FNN    &8.22e-02$\pm$6.91e-03  & 96.89\%$\pm$0.33\% \\
&& LySep-FNN & 8.33e-03$\pm$2.68e-03 & 99.69\%$\pm$0.06\% \\\cmidrule(lr){2-5}
&\multirow{2}{*}{$0.015$}
& CE-FNN    & 4.75e-02$\pm$6.11e-03 & 99.20\%$\pm$0.30\% \\
&& LySep-FNN & 2.06e-03$\pm$4.36e-04 & 99.98\%$\pm$0.03\% \\
\midrule
\multirow{8}{*}{$(10,20)$}&\multirow{2}{*}{$0$}
& CE-FNN    & 4.30e-01$\pm$1.81e-01 & 81.92\%$\pm$8.43\% \\
&& LySep-FNN & 4.12e-02$\pm$3.47e-03 & 98.43\%$\pm$0.25\% \\\cmidrule(lr){2-5}
&\multirow{2}{*}{$0.001$}
& CE-FNN    & 3.26e-01$\pm$2.25e-01 & 86.35\%$\pm$9.95\% \\
&& LySep-FNN & 3.82e-02$\pm$3.99e-03 & 98.41\%$\pm$0.27\% \\\cmidrule(lr){2-5}
&\multirow{2}{*}{$0.004$}
& CE-FNN    & 4.21e-01$\pm$1.84e-01 & 82.70\%$\pm$7.68\% \\
&& LySep-FNN & 2.01e-02$\pm$2.14e-03 & 99.14\%$\pm$0.09\% \\\cmidrule(lr){2-5}
&\multirow{2}{*}{$0.015$}
& CE-FNN    & 2.06e-01$\pm$1.88e-01 & 93.02\%$\pm$6.78\% \\
&& LySep-FNN & 6.63e-03$\pm$9.33e-04 & 99.75\%$\pm$0.05\% \\
\midrule
\multirow{8}{*}{$(20,20)$}&\multirow{2}{*}{$0$}
& CE-FNN    & 5.43e-01$\pm$1.14e-02 & 76.70\%$\pm$0.95\% \\
&& LySep-FNN & 1.19e-01$\pm$2.64e-02 & 96.49\%$\pm$1.41\% \\\cmidrule(lr){2-5}
&\multirow{2}{*}{$0.001$}
& CE-FNN    & 5.35e-01$\pm$1.18e-02 & 77.30\%$\pm$0.95\% \\
&& LySep-FNN & 1.15e-01$\pm$2.64e-02 & 96.11\%$\pm$1.50\% \\\cmidrule(lr){2-5}
&\multirow{2}{*}{$0.004$}
& CE-FNN    & 5.10e-01$\pm$1.15e-02 & 79.29\%$\pm$0.86\% \\
&& LySep-FNN & 9.06e-02$\pm$1.02e-02 & 96.57\%$\pm$0.60\% \\\cmidrule(lr){2-5}
&\multirow{2}{*}{$0.015$}
& CE-FNN    & 3.97e-01$\pm$1.51e-02 & 86.44\%$\pm$0.81\% \\
&& LySep-FNN & 2.61e-02$\pm$3.62e-03 & 99.41\%$\pm$0.14\%\\
\bottomrule
\end{tabular}
\caption{\em Final training cross-entropy loss and training accuracy for Example~2. Results are reported as mean $\pm$ standard deviation over $10$ seeds.}
\label{Tab_Case2_train_meanstd}
\end{table}

\begin{figure}[!ht]
\centering
\subfloat[numerical solutions]{
\includegraphics[scale=0.21]{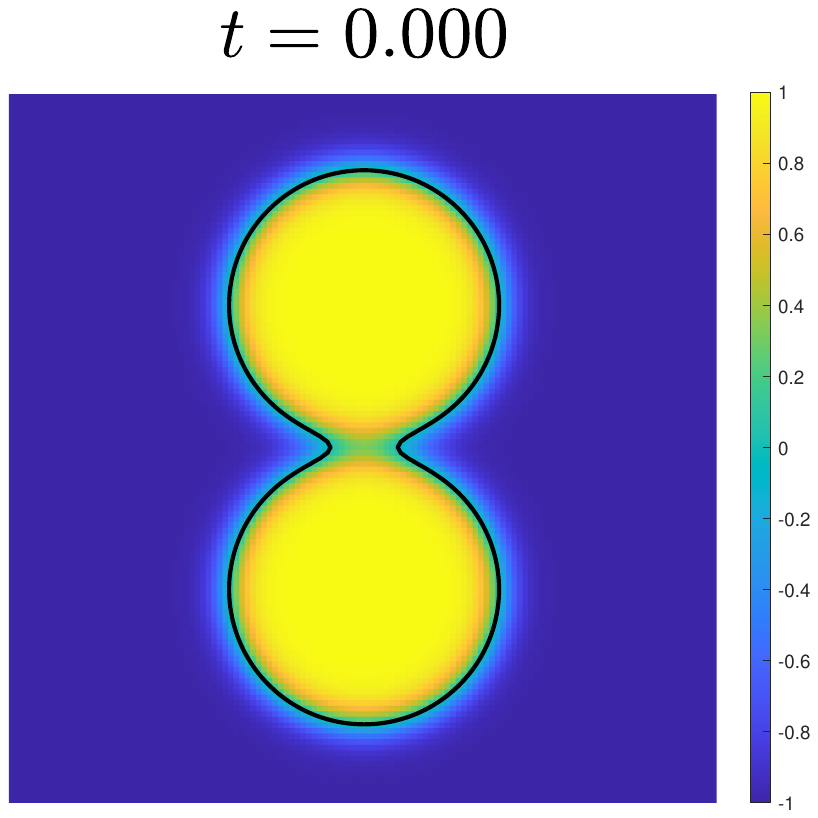}\quad
\includegraphics[scale=0.21]{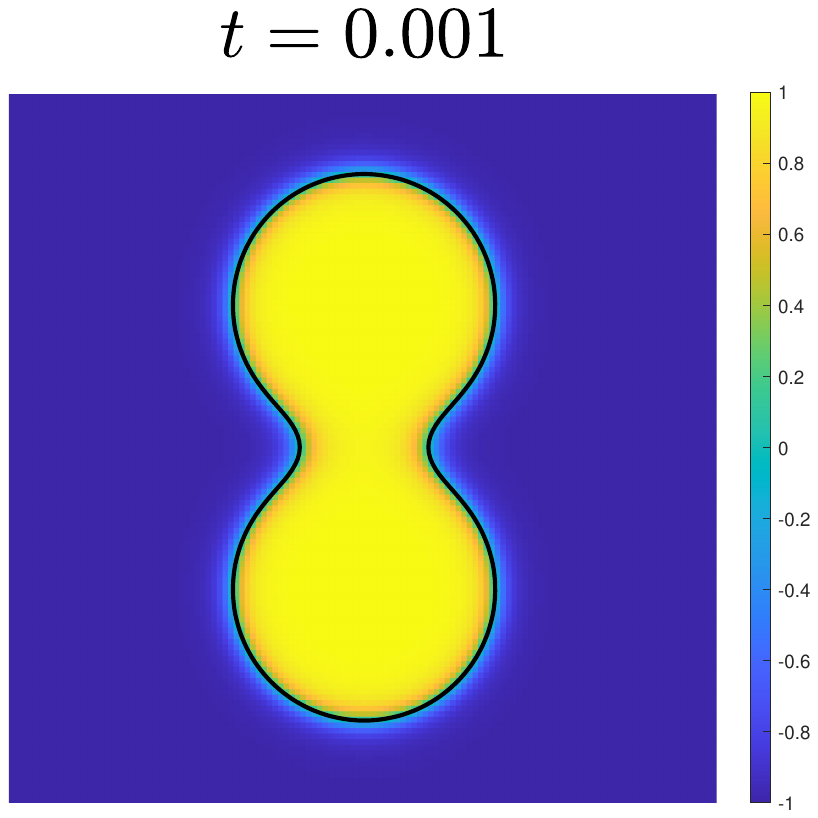}\quad
\includegraphics[scale=0.21]{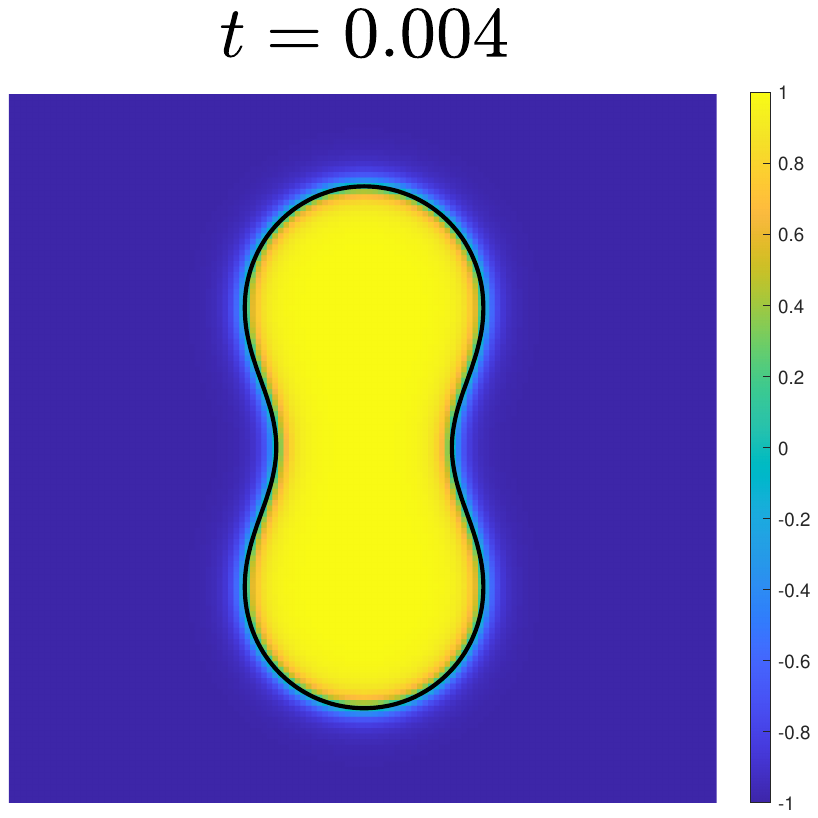}\quad
\includegraphics[scale=0.21]{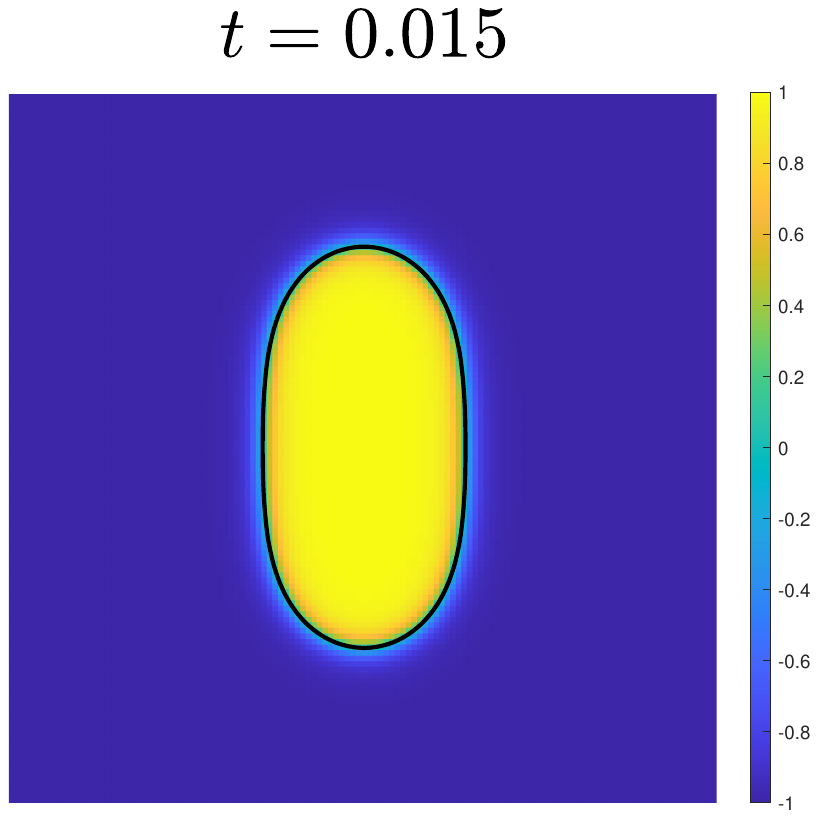}}\\
\subfloat[CE-FNN]{
\includegraphics[scale=0.20]{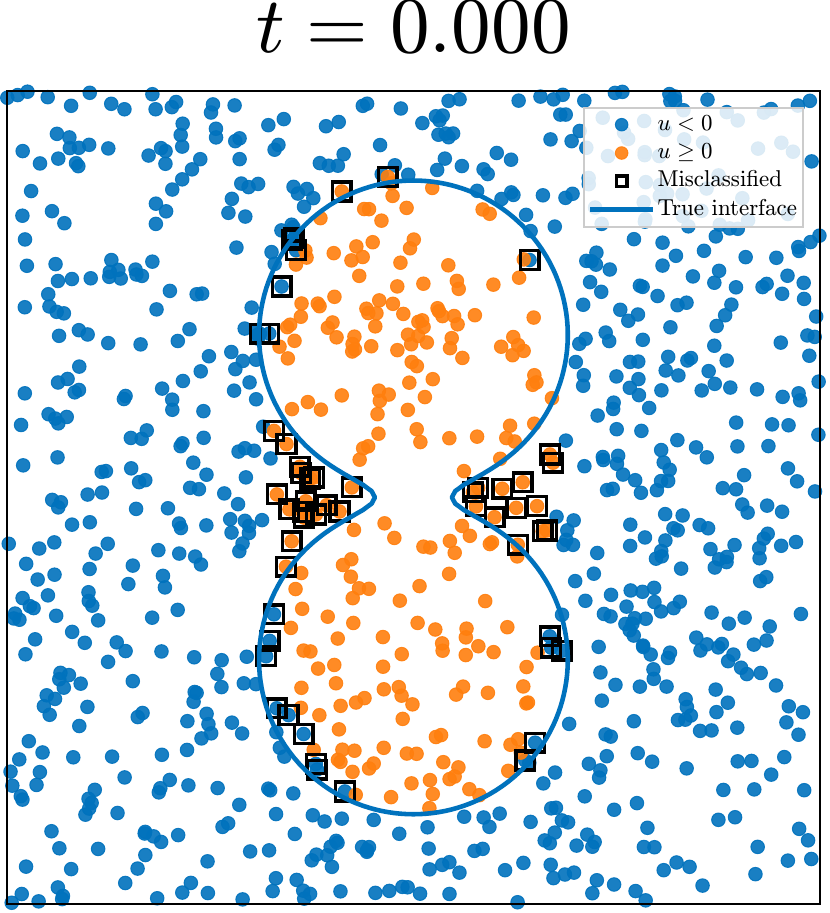}\quad
\includegraphics[scale=0.20]{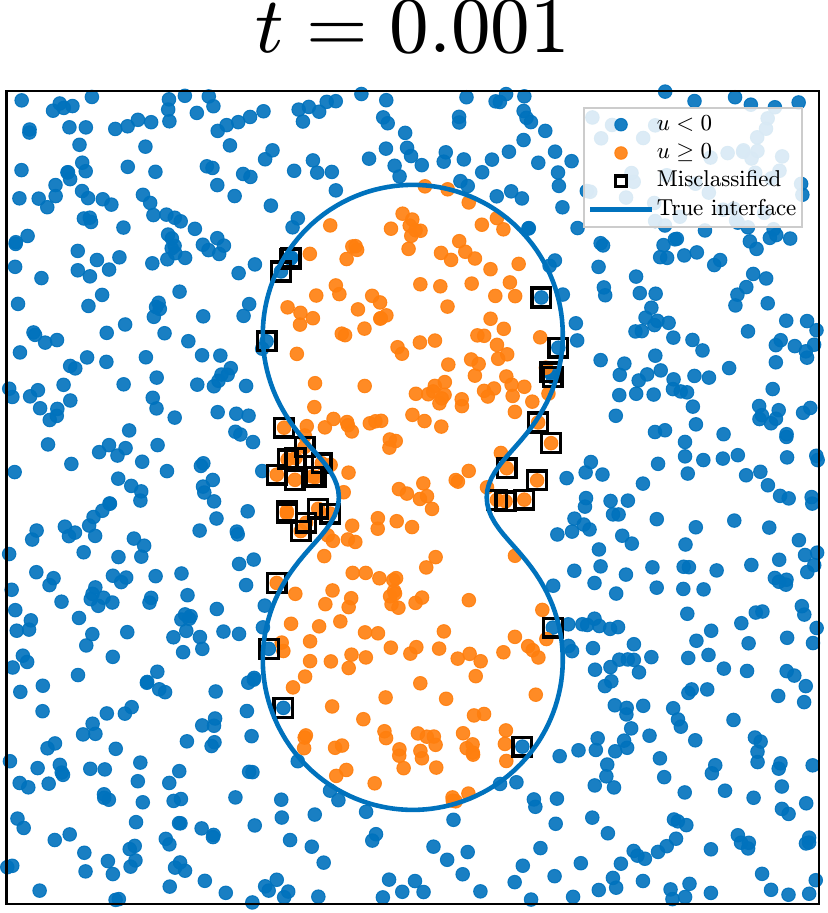}\quad
\includegraphics[scale=0.20]{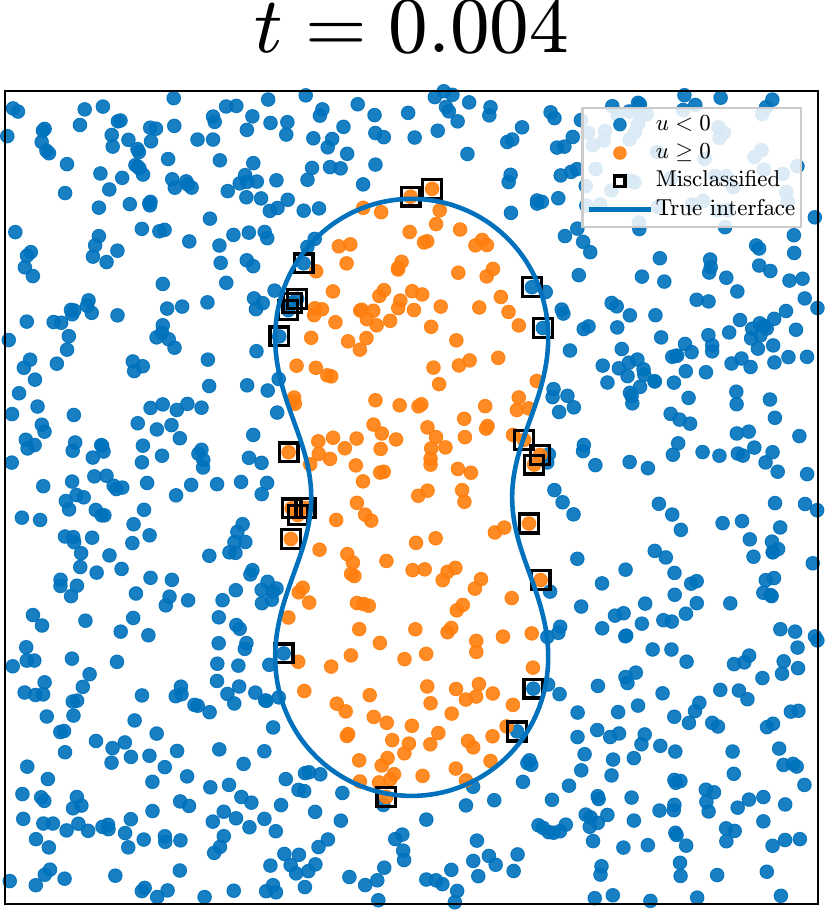}\quad
\includegraphics[scale=0.20]{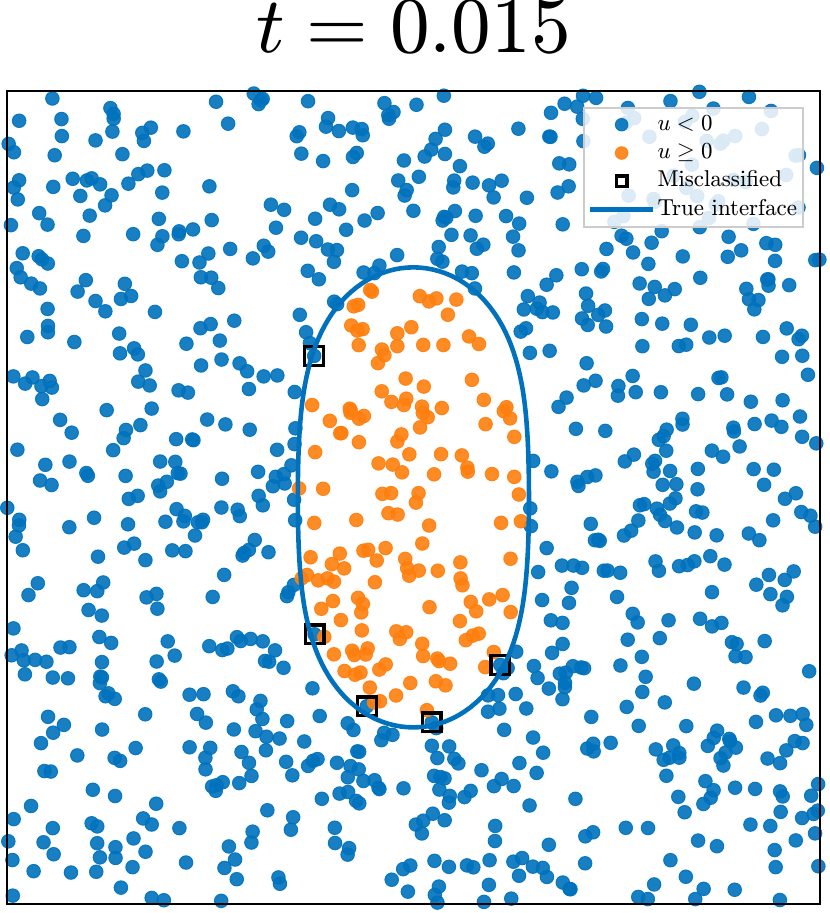}}\\
\subfloat[LySep-FNN]{
\includegraphics[scale=0.20]{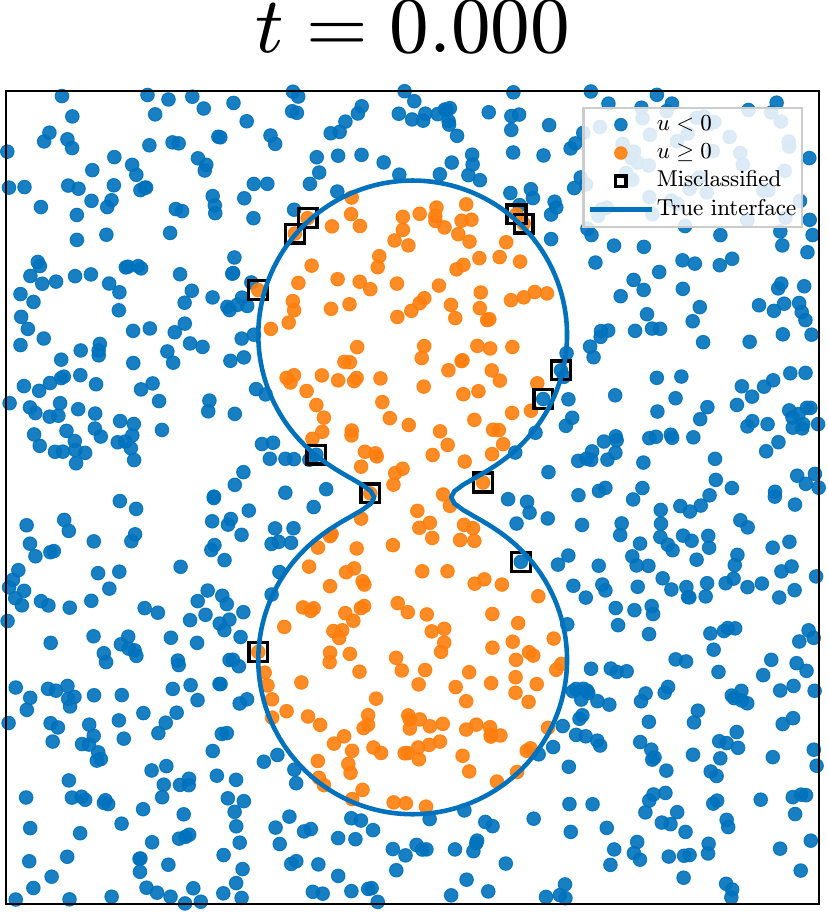}\quad
\includegraphics[scale=0.20]{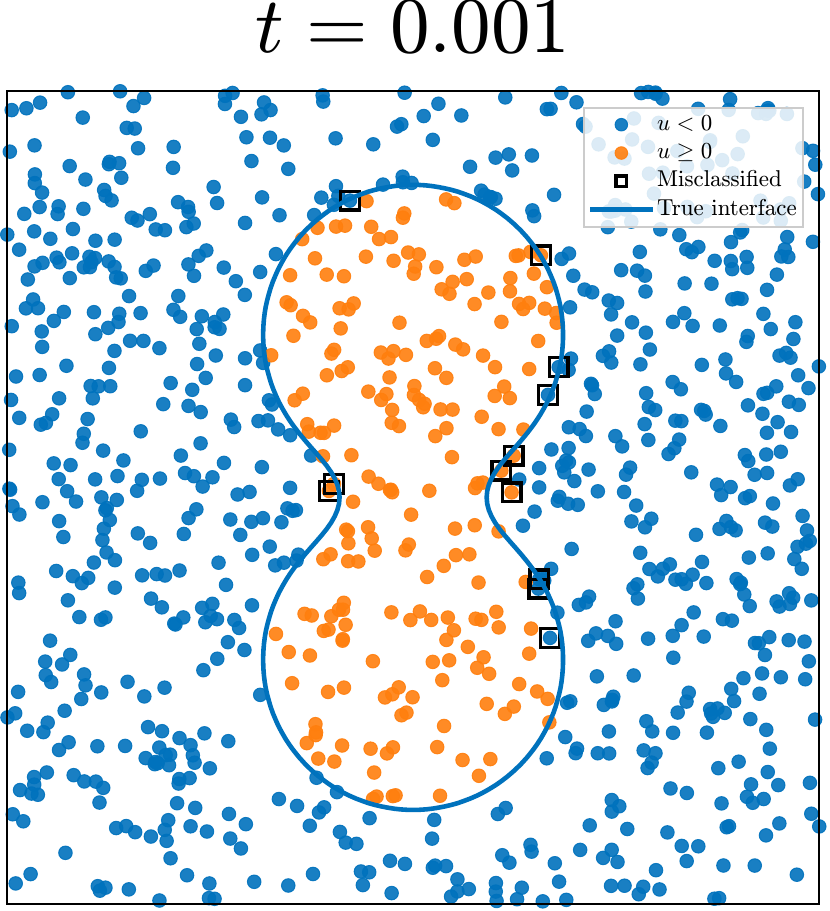}\quad
\includegraphics[scale=0.20]{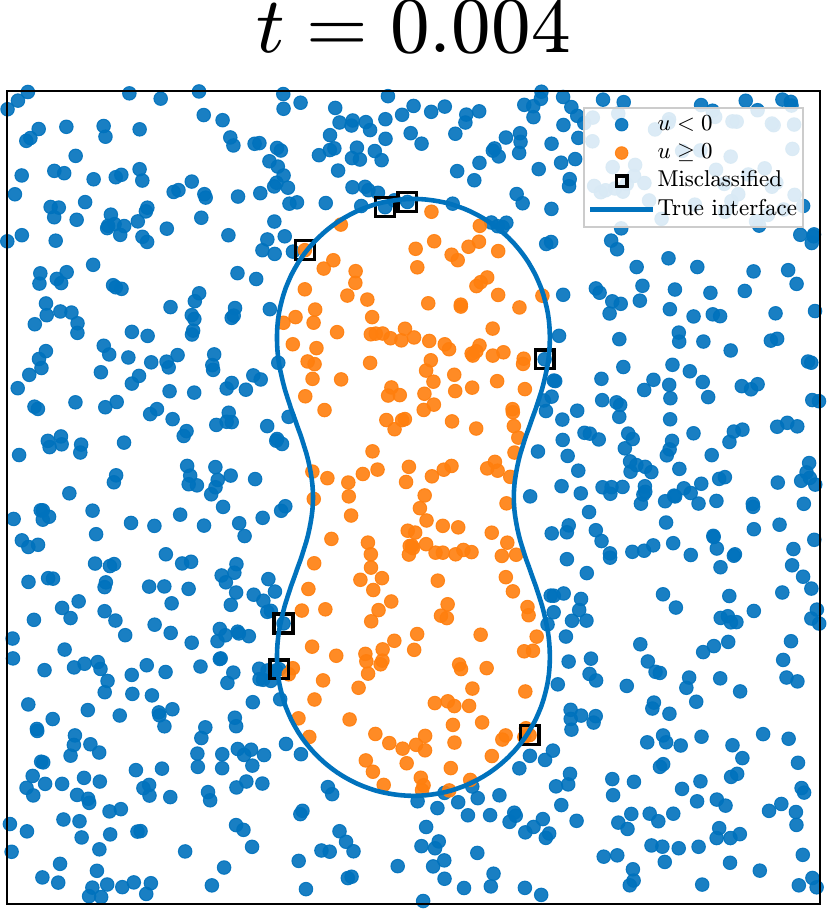}\quad
\includegraphics[scale=0.20]{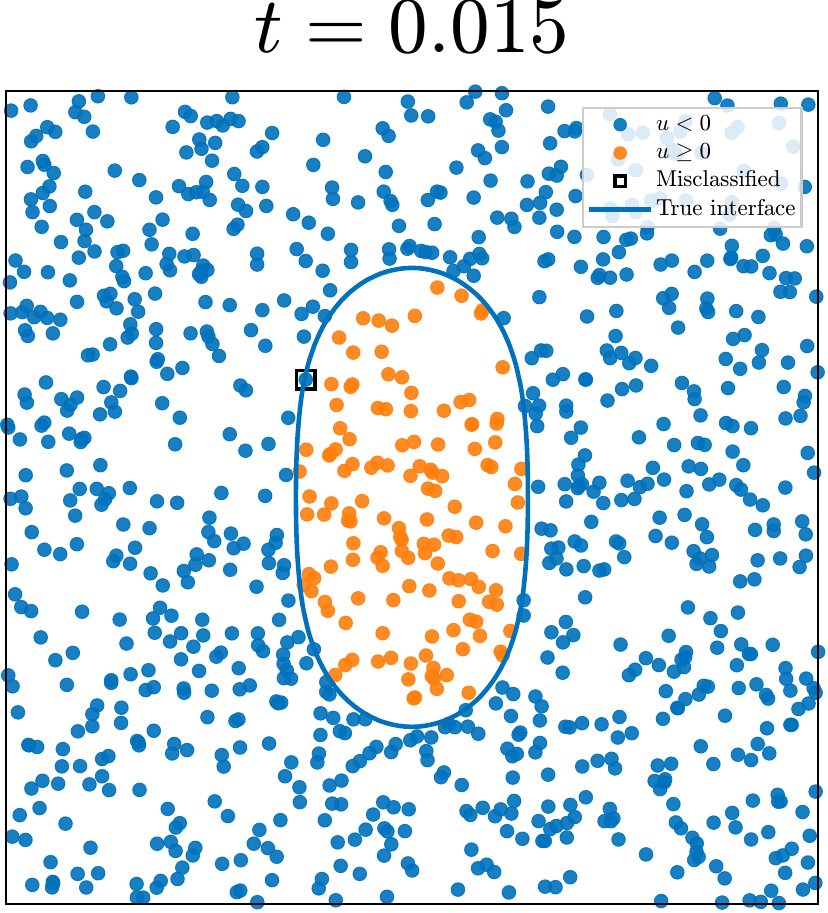}}\\
\caption{\em Numerical solutions and classification results on the test set for Example 2. (Results are from the best seed.)}
\label{Fig_4}
\end{figure}

Table \ref{Tab_Case2_train_meanstd} and Figure \ref{Fig_4} show that the advantage of LySep-FNN is present at all four time slices and becomes more pronounced as the network depth increases. For the shallow architecture $(L,M)=(3,30)$, both methods achieve high training accuracy, but LySep-FNN consistently attains a smaller final loss. For the deeper settings $(L,M)=(10,20)$ and $(20,20)$, the gap is much larger. For example, at $t=0$, the training accuracy improves from $81.92\%$ to $98.43\%$ for $(10,20)$ and from $76.70\%$ to $96.49\%$ for $(20,20)$, while the corresponding training loss is reduced from $4.30\times 10^{-1}$ to $4.12\times 10^{-2}$ and from $5.43\times 10^{-1}$ to $1.19\times 10^{-1}$, respectively. Figure \ref{Fig_4} shows that these quantitative improvements correspond to visibly more accurate interface tracking, especially at early time slices and for deeper networks, where CE-FNN produces many misclassified points near the moving interface. Overall, these results show that the proposed model improves both optimization stability and interface reconstruction quality on this PDE-induced classification task.

\subsection{Multi-class digit classification with CNNs}
In the third example, we consider the MNIST handwritten-digit dataset, a standard benchmark for multi-class image classification. We compare the conventional CE-CNN model with the proposed LySep-CNN model. The training set contains $60000$ images, and both methods are trained for $10^4$ iterations. During training, we use mini-batch gradient descent with batch size $1500$, and the training subset is refreshed every $200$ iterations.

To investigate the effects of network depth and channel size on the optimization behavior, we fix the number of fully connected layers at $L_2=2$ and consider three choices for the number of convolutional layers, namely $L_1=2,3,4$, together with three channel settings, $C=1,2,4$, where $C$ denotes the number of output channels in each convolutional layer. The convolution kernel size is set to $3$, the stride to $1$, and the pooling size to $2$, while the width of the hidden fully connected layer is fixed at $50$. To ensure a fair comparison across different CNN architectures, we keep the feature dimensions at the input to the fully connected layers consistent. This allows us to study the effects of varying the depth and channel size of the convolutional layers without interference from the fully connected layers. For each depth-channel configuration, we record the cross-entropy loss and training classification accuracy throughout the iterative process to compare the optimization performance of the two methods on this multiclass classification task.

\begin{table}[!ht]
\fontsize{7}{10.5}\selectfont
\setlength{\tabcolsep}{6pt}
\centering
\begin{tabular}{cclcc}
\toprule
$L_1$ &$C$& Models & Cross-entropy loss & Accuracy \\
\midrule
\multirow{8}{*}{$2$}&\multirow{2}{*}{$1$}
& CE-CNN     &3.24e-01$\pm$2.62e-02 & 90.52\%$\pm$0.92\%\\
&& LySep-CNN &2.15e-01$\pm$4.94e-02 & 94.39\%$\pm$1.76\%\\\cmidrule(lr){2-5}
&\multirow{2}{*}{$2$}
& CE-CNN     &2.60e-01$\pm$2.89e-02 & 92.39\%$\pm$1.04\% \\
&& LySep-CNN &8.67e-02$\pm$2.12e-02 &99.32\%$\pm$0.37\%\\\cmidrule(lr){2-5}
&\multirow{2}{*}{$4$}
& CE-CNN     &1.96e-01$\pm$2.13e-02 & 94.42\%$\pm$0.94\% \\
&& LySep-CNN &2.92e-02$\pm$4.87e-03 &99.80\%$\pm$0.09\%\\
\midrule
\multirow{8}{*}{$3$}&\multirow{2}{*}{$1$}
& CE-CNN     &3.04e-01$\pm$2.68e-02 & 90.77\%$\pm$0.82\% \\
&& LySep-CNN &2.02e-01$\pm$4.44e-02 &95.28\%$\pm$1.80\%\\\cmidrule(lr){2-5}
&\multirow{2}{*}{$2$}
& CE-CNN     &2.18e-01$\pm$2.65e-02 & 93.51\%$\pm$1.03\%\\
&& LySep-CNN &8.81e-02$\pm$3.51e-02 & 99.15\%$\pm$0.95\%\\\cmidrule(lr){2-5}
&\multirow{2}{*}{$4$}
& CE-CNN     &1.64e-01$\pm$1.69e-02 & 95.37\%$\pm$0.46\%\\
&& LySep-CNN &2.80e-02$\pm$9.59e-03 & 99.83\%$\pm$0.08\%\\
\midrule
\multirow{8}{*}{$4$}&\multirow{2}{*}{$1$}
& CE-CNN     &2.79e-01$\pm$3.30e-02 & 91.70\%$\pm$0.95\% \\
&& LySep-CNN &2.32e-01$\pm$7.19e-02 &
94.08\%$\pm$2.62\%\\\cmidrule(lr){2-5}
&\multirow{2}{*}{$2$}
& CE-CNN     &2.09e-01$\pm$2.84e-02 & 94.08\%$\pm$1.12\% \\
&& LySep-CNN &9.82e-02$\pm$3.47e-02 & 98.91\%$\pm$0.87\%\\\cmidrule(lr){2-5}
&\multirow{2}{*}{$4$}
& CE-CNN     &1.43e-01$\pm$1.71e-02 & 95.97\%$\pm$0.65\%\\
&& LySep-CNN &5.09e-02$\pm$2.82e-02 & 99.68\%$\pm$0.14\%\\
\bottomrule
\end{tabular}
\caption{\em Final training cross-entropy loss and training accuracy for Example~3. Results are reported as mean $\pm$ standard deviation over $10$ seeds.}
\label{Tab_Case3_train_meanstd}
\end{table}

Table \ref{Tab_Case3_train_meanstd} shows that LySep-CNN achieves lower final training loss and higher training accuracy than CE-CNN for all tested depth-channel configurations on the MNIST dataset. For $C=1$, the improvement is already visible for each choice of $L_1$, while for $C=2$ and $C=4$ the gap becomes more significant. In particular, when $(L_1,C)=(2,4)$, the final training loss decreases from $1.96\times 10^{-1}$ to $2.92\times 10^{-2}$ and the training accuracy increases from $94.42\%$ to $99.80\%$. When $(L_1,C)=(4,4)$, the corresponding values change from $1.43\times 10^{-1}$ to $5.09\times 10^{-2}$ and from $95.97\%$ to $99.68\%$, respectively. These results indicate that the proposed layer separation framework becomes increasingly effective as the CNN architecture deepens or widens.

\section{Conclusion}\label{sec:5}
This paper proposes a layer separation framework for deep learning optimization with the softmax cross-entropy loss. To address the high non-convexity inherent in deep neural networks, we introduce auxiliary variables to represent the outputs of hidden layers and construct layer separation models for fully connected and convolutional neural networks. The models retain only couplings between neighboring layers, thereby decomposing the original deeply nested problem into a sequence of simpler subproblems. We prove that the original cross-entropy loss is bounded above by the proposed layer separation loss up to a constant. We also develop alternating minimization algorithms for these models, in which the variables are partly updated in closed form. We further establish the loss-decreasing property of the proposed algorithms. Numerical experiments demonstrate that the proposed models exhibit more favorable optimization behavior than the standard cross-entropy models, especially for deeper neural networks. These results indicate that the proposed framework provides a theoretically justified and practically effective approach for cross-entropy training in deep learning.

Future work can extend this framework to other network architectures (e.g., ResNet and recurrent networks), establish stronger convergence results, and apply the framework to a broader class of scientific machine learning problems to further assess its ability to handle strongly coupled optimization problems.

%\section*{Acknowledgments}
%We would like to acknowledge the assistance of volunteers in putting together this example manuscript and supplement.

\bibliographystyle{siamplain}
\bibliography{references}

\end{document}